\newtheorem{definition}{Definition}
\newtheorem{theorem}{Theorem}
\newtheorem{lemma}{Lemma}
\newtheorem{remark}{Remark}
\newtheorem{corollary}{Corollary}
\newcommand\numberthis{\addtocounter{equation}{1}\tag{\theequation}}
\newcommand{\norm}[1]{\left\lVert#1\right\rVert}
\newcommand{\alg}{}
\title{Theoretically-Grounded Policy Advice from Multiple Teachers in Reinforcement Learning Settings with Applications to Negative Transfer}
\author{Yusen Zhan$^{1}$, Haitham Bou Ammar$^{2}$, and Matthew E. Taylor$^{1}$ \\
	$^{1}$Washington State University, Pullman, Washington  \\
	$^{2}$Princeton University, Princeton, New Jersey\\
	yusen.zhan@wsu.edu, hammar@princeton.edu, taylorm@eecs.wsu.edu
}
\begin{document}

\maketitle

\begin{abstract}
Policy advice is a transfer learning method where a student agent is able to learn faster via advice from a teacher. However, both this and other reinforcement learning transfer methods have little theoretical analysis. This paper formally defines a setting where multiple teacher agents can provide advice to a student and introduces an algorithm to leverage both autonomous exploration and teacher's advice. Our regret bounds justify the intuition that good teachers help while bad teachers hurt. Using our formalization, we are also able to quantify, for the first time, when negative transfer can occur within such a reinforcement learning setting.
\end{abstract}

\section{Introduction}
Reinforcement Learning (RL) has become a popular framework for autonomous behavior generation from limited feedback~\cite{sutton1998introduction}. Typical RL methods learn in isolation increasing their learning times and sample complexities. Transfer learning aims to significantly improve learning by providing informative knowledge from an external source. The source of such knowledge varies from source agents to humans providing advice~\cite{erez2008does,taylor2009transfer}. In this paper, we focus on a framework referred to as action advice or the \emph{advice model}~\cite{torrey2013teaching}. Here, the agent (i.e., student), learning in a task, has access to a teacher (another agent or human) which can provide action suggestions to facilitate learning.
Given ``good-enough'' teachers, such advice models have shown multiple benefits over standard RL techniques. For example, others~\cite{torrey2013teaching,LIP610219} show reduced learning times and sample complexities for successful behavior. 

These methods, however, suffer from two main drawbacks. First, validation results are empirical in nature and not formally-grounded. We do not have fundamental understanding of these methods. Consequently, it is difficult to formally comprehend why these methods work. 
Second, most of these techniques require the availability of a ``good-enough'' (optimal) teacher to benefit the student. Unfortunately, access to such teachers is difficult in a variety of complex domains, reducing the applicability of policy advice in real-world settings. 

In this paper, we remedy the aforementioned drawbacks by proposing a new framework for policy advice. Our method formally generalizes current single-teacher advice models to the multi-teacher setting. Our algorithm also remedies the need for optimal teachers by exploiting both the student's and the teacher's knowledge. Even if the teacher is not optimal, a student, using our algorithm, is still capable of acquiring optimal behavior in a task; a property not supported by some state-of-the-art methods, e.g., learning from demonstration. We theoretically and empirically analyze the performance of the proposed method and derive, for the first time, regret bounds quantifying the successfulness of action advice. We also provide theoretical justification for current methods (i.e., single-teacher models) as special case of our formulation in the appendix. Our contributions can be summarized as: 
\begin{itemize}
	\item defining (formally) multi-teacher advice models, 
	\item introducing novel algorithms leveraging teacher and student knowledge, 
	\item deriving the regret analysis showing reduced sample complexities,
	\item deriving theoretical guarantees for single teacher advice models, and
	\item quantifying negative transfer under such advice model. 
\end{itemize}

Interestingly, these theoretical results justify a well-known intuition inherent to advice models: ``good teachers help while bad teachers hurt.'' The results  show that students can still achieve optimal behavior when being advised by bad teachers. They, however, pay an extra cost in terms of their learning times or sample complexities, relative to an optimal teacher. This should inspire researchers to adopt high quality teacher policies or avoid ``bad teachers'' if possible.

Given our formalization, we also derive a relation to negative transfer. We quantify, for the first time, the occurrence of negative transfer in action advice models, shedding the light on failure modes of these methods. Consequently, these results yield two claims about transfer learning. First, high quality transfer knowledge may still cause negative transfer when the target algorithm is able to outperform the source knowledge. Second, expert knowledge is important for the researchers to determine whether or not to transfer because evaluation of the transfer knowledge is usually expensive (it is equivalent to evaluating the teacher policy in the target MDP).


\section{Preliminaries}
\subsection{Online Reinforcement Learning \& Regret Model}
In RL, an agent must sequentially select actions to maximize its total expected return. Such problems are formalized as a Markov decision Process (MDP), defined as $\mathcal{M}= \langle \mathcal{S},\mathcal{A}, \mathcal{P}, \mathcal{R} \rangle$, where $\mathcal{S}$ and $\mathcal{A}$ denote the finite state and actions spaces with a total size of $|\mathcal{S}|$ and $|\mathcal{A}|$ respectively, $\mathcal{P}: \mathcal{S} \times \mathcal{A} \times \mathcal{S} \rightarrow [0,1]$ represents the probability transition kernel describing the task dynamics, and $\mathcal{R}: \mathcal{S} \times \mathcal{A} \rightarrow \mathbb{R}$ is the reward function quantifying the performance of the agent. The total expected return of an agent following an algorithm, $\mathcal{G}$, to compute the optimal action-selection rule from a starting state $\bm{s} \in \mathcal{S}$ after $T$ time steps is defined as: 
\begin{equation}
\label{Eq:return}
\mathcal{R}^{\mathcal{G}} (\bm{s},T) = \mathbb{E}\left[\sum_{t=0}^{T} \mathcal{R} (\bm{s}_{t},\bm{a}_{t})\right],
\end{equation} 
with $\bm{s}_{t} \in \mathcal{S}$ and $\bm{a}_{t} \in \mathcal{A}$. The goal is to determine an optimal policy, $\pi^{\star}: \mathcal{S} \rightarrow \mathcal{A}$ that maximizes the total expected return.

\textbf{Regret Model:} Similar to standard online learning, we quantify the performance of the algorithm, $\mathcal{G}$, by measuring its regret with respect to optimality. We define the regret of a state $\bm{s}$ after $T$ time steps in terms of the expected reward as: 
\begin{equation}
\label{Eq:regret}
\bm{\Delta}^{\mathcal{G}}(\bm{s},T)= \bm{\lambda}^{\star}T - \mathcal{R}^{\mathcal{G}} (\bm{s},T),
\end{equation}
where $\bm{\lambda}^{\star}$ is the optimal reward acquired by following an optimal algorithm $\mathcal{G}^{\star}$ at each time step. In the general case when no reachability assumptions are imposed, it is easy to construct MDPs in which algorithms suffer high regret. Following \citeauthor{puterman2005markov} \shortcite{puterman2005markov}, we remedy this problem by considering weakly-communicating MDPs\footnote{Please note that weakly-communicating MDPs are considered the most general among subclasses of MDPs, see~\citeauthor{puterman2005markov} \shortcite{puterman2005markov}.} defined as follows.
\begin{definition}
	\label{def:weakcommun}
	An MDP is called \textit{weakly communicating} in such a case where the state set $\mathcal{S}$ can be decomposed into two subsets, $\mathcal{S}_{1}$ and $\mathcal{S}_{2}$. In $\mathcal{S}_{1}$ any state is reachable from every other state under a deterministic policy, $\pi$, while states in $\mathcal{S}_{2}$ are transient under all policies.
\end{definition}

The optimal gain, $\bm{\lambda}^{\star}$ in Equation~\ref{Eq:regret}, is state independent. That is, any $\bm{s} \in \mathcal{S}$, shares the same optimal expected reward~\cite{puterman2005markov}, which can be solved for using: 
\begin{equation*}
\bm{h}^{\star}+\bm{\lambda}^{\star}\bm{e}=\max_{\bm{a} \in \mathcal{A}}\{\mathcal{R}(\bm{s},\bm{a})+\mathcal{P}^\mathsf{T}_{\bm{s},\bm{a}} \bm{h}^\star\},
\end{equation*}
where $\bm{h}^{\star}$ is an $|\mathcal{S}|$ dimensional vector typically referred to as the bias vector, $\mathcal{P}_{\bm{s},\bm{a}}$ denotes the probability to transition from $\bm{s}$ applying action $\bm{a}$, and  $\bm{e} \in \mathbb{R}^{|\mathcal{S}|}$ is a unit dimensional vector. When needed, we explicitly write the dependency of $\bm{\lambda}^{\star}$ and $\bm{h}^{\star}$ as $\bm{h}^{\star}(\bm{s}; \mathcal{M})$ and $\bm{\lambda}^{\star}(\mathcal{M})$. We also define the span of $\bm{h}$ as: $
\text{sp}(\bm{h})= \max_{\bm{s} \in \mathcal{S}} \bm{h}(\bm{s})- \min_{\bm{s} \in \mathcal{S}} \bm{h}(\bm{s}).$

Finally, we follow \citeauthor{bartlett2009regal} \shortcite{bartlett2009regal} to define reachability in weakly communicating MDPs using the \emph{one-way diameter}: 
$\text{diam}_{\text{one-way}}(\mathcal{M}) = \max_{\bm{s} \in \mathcal{S}} \min_{\pi} T^{\pi}_{\bm{s}_1\to \bar{\bm{s}}}$,
with $T^{\pi}_{\bm{s}_1\to \bar{\bm{s}}}$ being the expected number of steps needed for reaching $\bar{\bm{s}}=\arg\max_{\bm{s} \in \mathcal{S}} \bm{h}^{\star}(\bm{s}; \mathcal{M})$ from $\bm{s}_{1} \in \mathcal{S}$. 

\subsection{Algorithms for Weakly-Communicating MDPs}
REGAL.C is an on-line algorithm for weakly communicating MDPs developed by \citeauthor{bartlett2009regal}~\shortcite{bartlett2009regal}. The basic idea is that the REGAL.C can estimate the true MDP with high probability in order to learn an $\epsilon$-optimal policy with high probability. Let $N(\bm{s},\bm{a},\bm{s}^{\prime};t)$ be the number of state-action-state triples $(\bm{s},\bm{a},\bm{s}^{\prime})$ that have been visited at time $t$. Further, let $t_i$ to denote the initial time of the iteration $i$. For brevity, we use  $N_i(\bm{s},\bm{a},\bm{s}^{\prime})$ and  $N_i(\bm{s},\bm{a})$ to denote $N(\bm{s},\bm{a},\bm{s}^{\prime};t_i)$ and $N(\bm{s},\bm{a};t_i)$ at iteration $i$. We also use $v_i(\bm{s},\bm{a})=N_{i+1}(\bm{s},\bm{a})-N_{i}(\bm{s},\bm{a})$ to denote the number of times a state-action pair $(\bm{s},\bm{a})$ is visited during iteration $i$. For each iteration $i$, REGAL acquires a dataset $\mathcal{D}_i$ as input and updates the transition probability (see Equation~\ref{eq:updatepro}). It then constructs a set of MDPs $\mathcal{M}^i$ to select from using $\max \lambda^*(M)\text{   s.t.   }sp(h^*(M)) \leq H$, where $H$ is the upper bound on the span $\text{sp}(\bm{h}^*(\mathcal{M}))$. Given the MDP, REGAL.C uses value iteration for acquiring the optimal policy. These steps are summarized in Algorithm~\ref{alg:REGAL.C}.
\begin{algorithm}[t]
	\caption{REGAL.C: Constrained Optimization}
	\textbf{Input}: parameter $H$, dataset $\mathcal{D}_i$ and current time $T$ \\
	\textbf{Output}: $\hat{\pi}_{i+1}$
	\label{alg:REGAL.C}
	\begin{algorithmic}[1]
		\STATE $t_i=$ current time $T$
		\STATE Use $\mathcal{D}_i$ to update the state transition probabilities by \begin{equation}
		\label{eq:updatepro}
		\hat{P}^{t}_{\bm{s},\bm{a}}(\bm{s}^{\prime})=\frac{N(\bm{s},\bm{a},\bm{s}^{\prime};t)}{\max\{N(\bm{s},\bm{a};t),1\}}
		\end{equation}
		\STATE With $t=t_i$, $\mathcal{M}^i$ is the set of MDPs s.t.
		\[
		\norm{P_{\bm{s},\bm{a}}-\hat{P}^t_{\bm{s},\bm{a}}}_1\leq \sqrt{\frac{12|\mathcal{S}|\log(2|\mathcal{A}|t/\delta)}{\max\left\{N(\bm{s},\bm{a},\bm{s};t),1\right\}}}
		\]
		\STATE Select $M^i \in \mathcal{M}^i$ by following optimization equation over $\forall M \in \mathcal{M}^i$,
		\begin{align*}
		\max \bm{\lambda}^{\star}(M)\text{   s.t.   }\text{sp}(\bm{h}^*(\mathcal{M})) \leq H
		\end{align*}
		\STATE $\hat{\pi}_{i+1}$=average reward optimal policy for $M^i$ (value iteration)
		\RETURN $\hat{\pi}_{i+1}$
	\end{algorithmic}
\end{algorithm}

\subsection{Single Teacher Advice Model}
The single teacher advice model is a framework in which a student learning in an environment benefits from a teacher's advice to speed-up learning. We define such a framework as the tuple of $\langle \pi^{\mathfrak{T}}, \bm{b}, \mathfrak{S}, \bm{f}_{d} \rangle$. Here, $ \pi^{\mathfrak{T}}$ denotes the teacher's policy, $\bm{b}$ represents the budget constraining the teacher's advice, $\mathfrak{S}$ is the student, and $\bm{f}_{d}$ is a function controlling the advice from the teacher to the student. Apart from considering single teacher models, previous work assumed optimal teachers where students always execute recommended actions. It is easy to construct complex settings in which access to optimal teachers is difficult. Consequently, we extend these works to the more realistic settings of sub-optimal teachers, as we detail later. 

\section{Multiple Teacher Advice Model}
In this section we start by extending the single teacher model of~\citeauthor{torrey2013teaching}~\shortcite{torrey2013teaching} to the multiple \emph{non-optimal} teacher setting. Our advice model for $m$ teachers is defined as the tuple $\langle \Pi, \mathcal{B}, \mathfrak{S}, \bm{f}_{d}\rangle$, where $\Pi=\{\pi^{\mathfrak{T}_{1}}_1,\pi^{\mathfrak{T}_{2}}_2,\dots,\pi^{\mathfrak{T}_{m}}_m\}$ is the set of $m \in \mathbb{N}$ teacher policies, and $\mathcal{B}=\{\bm{b}_1,\bm{b}_2,\dots,\bm{b}_m\}$ denotes the set of budgets. It is easy to see that in case $\Pi=\{\pi^{\mathfrak{T}}\}$ and $\mathcal{B}=\{\bm{b}\}$, we can easily recover the special case single teacher model. We also generalize the work of~\citeauthor{torrey2013teaching}~\shortcite{torrey2013teaching} by making no restrictive assumptions on the optimality of any of the teachers. We measure the performance of the teacher with respect to a base policy $\pi^{\mathfrak{B}}$ in terms of regret: 
\begin{definition}
	Given a teacher's policy, $\pi^{\mathfrak{T}} \in \Pi$, and a base policy $\pi^{\mathfrak{B}}$, then the regret of following $\pi^{\mathfrak{T}}$ is related to that acquired by following $\pi^{\mathcal{B}}$ using: 
	\begin{equation*}
	\bm{\Delta}^{\mathfrak{T}}(\bm{s},T)=\rho\bm{\Delta}^{\mathfrak{B}}(\bm{s},T),
	\end{equation*}
	where $\rho \geq 0$ denotes the regret ratio, $\bm{\Delta}^{\mathfrak{T}}(\bm{s},T)=\bm{\lambda}^{\star}T - \mathcal{R}^{\mathfrak{T}}(\bm{s},T)$ and $\bm{\Delta}^{\mathfrak{B}}(\bm{s},T)=\bm{\lambda}^{\star}T - \mathcal{R}^{\mathfrak{B}}(\bm{s},T).$
\end{definition}
The above definition captures the three interesting cases quantifying the performance of an advice-based algorithm. If the teacher is optimal, i.e., when $\bm{\Delta}^{\mathfrak{T}} (\bm{s},T)=0$, $\rho$ is also $0$. In case  $0 < \rho \leq 1$, then $\bm{\Delta}^{\mathfrak{T}}(\bm{s},T) \leq \bm{\Delta}^{\mathfrak{B}}(\bm{s},T)$ indicating the the teacher's policy is at least as good as the base policy $\pi^{\mathfrak{B}}$. Finally, when $\rho >1$, $\bm{\Delta}^{\mathfrak{T}}(\bm{s},T) > \bm{\Delta}^{\mathfrak{B}}(\bm{s},T)$ implying the underperformance of the teacher. Consequently, with the correct choice of the teacher by $\rho$ one can still achieve successful advice even in such a generalized setting.

\section{Efficient Multi-teacher Advice}\label{Sec:New}
In this section, we propose a new algorithm which combines the advice policy and the MDPs information collected so far. This allows for an accurate framework outperforming state-of-the-art techniques for policy advice. On a high level, our algorithm consists of three main steps. First, a combined policy is constructed based on multiple teachers. Second, data depending on both teacher's advice as well as MDP information is collected. Third, a new policy is computed online. 

Next, we outline each of the three steps and describe our novel algorithm. Having achieved an accurate advice model, we then rigorously analyze the theoretical aspects of our method and show a decrease in sample complexities compared to current techniques. 
\subsection{The Grand-Teacher}
Our method of policy advice constructs a grand teacher combining all teacher policies in a meta-policy. To construct the grand-teacher, we use an ensemble method and design two meta-policy variations: online and offline-constructions. Next, we detail each of the two variations. 

\textbf{Online Grand-Teacher:} 
In the \textit{online} construction, whenever the student observes an unvisited state, $\bm{s} \in \mathcal{S}$, each teacher provides its policy advice of the form $\pi_{i}^{\mathfrak{T}_{i}}$, for all $i \in \{1,\dots, m\}$ with $m$ being the total number of teachers. The student then selects and stores the majority action from all teachers for that state $\bm{s}$. As far as budget is concerned, it is easy to see that we only require to know advice for each state in $\mathcal{S}$, thus $\bm{b}_{1}=\dots=\bm{b}_{m}=|\mathcal{S}|$. Though easy to implement and test, the online construction suffers from the potentially unrealistic need for the continuous availability of online teachers. 

\textbf{Offline Grand-Teacher:} To eliminates the need for an online teacher at each visit of a new state, the offline procedure traverses the states in the MDP for constructing the meta-advice policy. The main steps of this construction is summarized in Algorithm~\ref{Algo:ConstructTeacher}.  

\begin{algorithm}[t]
	\caption{Offline Construction of the Meta-Teacher}
	\textbf{Input}: The set of states in the MDP, $\mathcal{S}$. \\
	\label{Algo:ConstructTeacher}
	\begin{algorithmic}[1]
		\WHILE{$\exists s\in \mathcal{S}$ is not visited}
		\STATE Follow a policy in the MDP
		\IF{Current state $\bm{s}$ is not visited}
		\STATE Query all teachers for advice and select action $\bm{a}$ using Majority Vote. 
		\ENDIF
		\ENDWHILE
		\RETURN $\pi^{\text{grand-teacher}}$ 
	\end{algorithmic}
\end{algorithm}
Note that Algorithm~\ref{Algo:ConstructTeacher} is capable of constructing an offline meta-teacher but requires extra exploration in the MDP. We next show that $\mathcal{O}\left(|\mathcal{S}|\log\frac{|\mathcal{S}|}{\bm{\delta}}\right)$ steps are enough to explore each state in the MDP with high probability: 
\begin{theorem}[Sample Complexity]
	If Algorithm~\ref{Algo:ConstructTeacher} independently and uniformly explores each state $\bm{s} \in \mathcal{S}$, then with probability of at least $1-\bm{\delta}$, $\mathcal{O}\left(|\mathcal{S}|\log\frac{|\mathcal{S}|}{\bm{\delta}}\right)$ steps are sufficient to visit each state at least one time. 
\end{theorem}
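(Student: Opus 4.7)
The plan is to recognize this as a classical coupon collector-style argument and obtain the high-probability bound via a union bound. Let $n = |\mathcal{S}|$ for brevity, and let $T$ denote the number of exploration steps. Since the algorithm picks each state independently and uniformly at random, for any fixed state $\bm{s} \in \mathcal{S}$ the probability that $\bm{s}$ is \emph{not} visited in a single step is $1 - 1/n$, so the probability that $\bm{s}$ remains unvisited after $T$ independent steps is
\begin{equation*}
\Pr[\bm{s} \text{ unvisited after } T \text{ steps}] = \left(1 - \frac{1}{n}\right)^{T} \leq e^{-T/n},
\end{equation*}
using the standard inequality $1 - x \leq e^{-x}$.

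Next, I would apply a union bound over all $n$ states to control the probability that \emph{some} state is still unvisited:
\begin{equation*}
\Pr[\exists \bm{s} \in \mathcal{S} : \bm{s} \text{ unvisited after } T \text{ steps}] \leq n \cdot e^{-T/n}.
\end{equation*}
I would then demand that this failure probability be at most $\bm{\delta}$, i.e., $n \, e^{-T/n} \leq \bm{\delta}$. Solving for $T$ gives $T \geq n \log(n/\bm{\delta})$, so choosing $T = \lceil n \log(n/\bm{\delta}) \rceil = \mathcal{O}(|\mathcal{S}| \log(|\mathcal{S}|/\bm{\delta}))$ suffices. Complementing the event then yields the desired statement that with probability at least $1 - \bm{\delta}$, every state has been visited at least once.

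There is no real obstacle here: the only subtlety worth flagging is whether ``independently and uniformly explores each state'' really supports modeling exploration as independent uniform draws of the current state, which is what allows the $(1-1/n)^T$ factorization. Given the hypothesis of the theorem, this is essentially assumed, so the union bound combined with the standard exponential inequality completes the argument. If one wanted a tighter constant, one could invoke the sharper coupon-collector concentration (e.g., $T = n \ln n + c n$ gives failure probability $\leq e^{-c}$), but the looser union bound above already achieves the stated asymptotic rate.
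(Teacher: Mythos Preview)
Your proof is correct and follows essentially the same route as the paper: bound the probability that a fixed state is unvisited after $T$ uniform independent draws by $(1-1/|\mathcal{S}|)^T \le e^{-T/|\mathcal{S}|}$, apply a union bound over the $|\mathcal{S}|$ states, and solve $|\mathcal{S}|\,e^{-T/|\mathcal{S}|}\le \bm{\delta}$ for $T$. The only cosmetic difference is that the paper first parametrizes the number of steps as $c|\mathcal{S}|\log|\mathcal{S}|$ and then solves $|\mathcal{S}|^{1-c}=\bm{\delta}$ for $c$, arriving at the same $|\mathcal{S}|\log(|\mathcal{S}|/\bm{\delta})$ bound.
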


\subsection{Multi-Teacher Advice Algorithm}
To improve current methods and arrive at a more realistic advice framework, we now introduce our algorithm combining the grand-teacher's policy and information attained by the student from the MDP. 

Our algorithm is based on the following intuition. At the beginning of the learning process, a student requires guidance as it typically has little to no information of the task to be solved. As time progress and the student explores, the MDP can be effectively exploited for successful learning. Unfortunately, such a process is not well modeled using current methods. Here, we remedy this problem by introducing an algorithm which follows the teacher's advice at the very beginning and then switches to a policy computed by an algorithm operating within the MDP. That is, the teacher guides the student at the beginning of the learning process and as the student gathers more experience, the teacher's influence diminishes over time by switching into a policy computed by REGAL.C. The overall procedure is summarized in Algorithm~\ref{alg:learning_dagger}. Note that our algorithm is inspired by DAGGER~\cite{ross2010reduction} in the sense that policies are updated by collecting data using a mixture of action selection rules (i.e., student and teacher policies). Contrary to DAGGER, however, our method collects all trajectories opposed to only collecting inconsistent actions, allowing for more accurate and efficient updates. 

\begin{algorithm}[t]
	\caption{\alg}
	\textbf{Input}: $\pi^{\mathfrak{T}}$ = the grand-teacher policy, $\hat{\pi}_1$=any policy \\
	\textbf{Output}: $\pi$, the $\epsilon$-optimal policy 
	\label{alg:learning_dagger}
	\begin{algorithmic}[1]
		\STATE $T=0$	
		\FOR{$i=1$ to $m$}
		\STATE Let $\pi_{i+1}=\beta_i\pi^T+(1-\beta_i)\hat{\pi}_i$
		\STATE Follow $\pi_i$ until $T_i$-steps
		\STATE Get dataset $D_i$
		\STATE $T=T+T_i$
		\STATE $\hat{\pi}_{i+1}=REGAL.C(D_i,T)$ \COMMENT{See Algorithm \ref{alg:REGAL.C}}
		\ENDFOR
		\RETURN $\pi_{m+1}$
	\end{algorithmic}
\end{algorithm}

To leverage both the teacher's and learned policies, we set a mixed policy of the form $\pi_{i+1}=\beta_i\pi^{\mathfrak{T}}+(1-\beta_i)\hat{\pi}_i$,
for $0\leq \beta_i\leq 1$ to guide the student's dataset collection while allowing the teacher to fractionally control exploration needed to collect data at the next iteration. $\beta$ should typically be set so as to decay exponentially over time. This decreases the student's reliance on the teacher and allows it to exploit the knowledge gathered from the MDP to learn better behaving policies than that of the teacher. It is for this reason that our algorithm, contrary to other methods, does not impose any optimality restrictions on the teacher. Having collected the dataset, Algorithm~\ref{alg:learning_dagger} uses REGAL.C (Algorithm \ref{alg:REGAL.C}) to update $\hat{\pi}_{i}$.

\subsection{Theoretical Guarantees}
In this section we formally derive the regret exhibited by our algorithm. At a high level, we provide two theoretical results. In the first, we consider the general teacher case, while in the second we derive a corollary of the regret for optimal teachers. We show, for the first time, \emph{better than constant} improvements compared to standard learners.

\begin{theorem}
	\label{thm:regalregret}
	Assume Algorithm \ref{alg:learning_dagger} is running for total $T$ steps in a weakly communicating MPD $\mathcal{M}$ starting from an initial state $\bm{s} \in \mathcal{S}$. Let $H$ be a parameter such that $H \geq \text{sp}(\bm{h}^{\star}(\mathcal{M}))$. Then, with a probability of at least $1-\delta$, the total regret is given by: 
	$
	\bm{\Delta}(\bm{s},T)=\mathcal{O}\left((1-\bm{\beta}+\bm{\rho\beta})H|\mathcal{S}|\sqrt{|\mathcal{A}|T\log \frac{|\mathcal{A}|T}{\delta}}\right),
	$
	where $\bm{\beta}\in[0,1]$ such that $1-\bm{\beta}=\max_{1\leq i\leq m}\{1-\beta_i\}$, and $\bm{\rho}\geq 0$ is the ratio between the teacher's regret $\bm{\Delta}^{\mathfrak{T}}$ and the regret exhibited by REGAL.C $\bm{\Delta}^{\text{REGAL.C}}$ such that $\Delta^{\mathfrak{T}}\leq \bm{\rho}\bm{\Delta}^{\text{REGAL.C}}$.
\end{theorem}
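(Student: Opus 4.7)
The plan is to decompose the cumulative regret across the $m$ outer iterations of Algorithm~\ref{alg:learning_dagger} and then reduce every piece to a REGAL.C guarantee through the mixing structure $\pi_{i+1} = \beta_i \pi^{\mathfrak{T}} + (1-\beta_i)\hat{\pi}_i$. First I would split
$$\bm{\Delta}(\bm{s},T) = \sum_{i=1}^{m} \bm{\Delta}^{\pi_i}(\bm{s}_i,T_i), \qquad \sum_i T_i = T,$$
where $\bm{s}_i$ is the state entering iteration $i$. Because $\pi_i$ draws each action from $\pi^{\mathfrak{T}}$ with probability $\beta_i$ and from $\hat{\pi}_i$ otherwise, linearity of expectation on the one-step reward, combined with the gain/bias characterization $\bm{\lambda}^{\star}\bm{e} + \bm{h}^{\star} = \max_{\bm{a}}\{\mathcal{R}(\bm{s},\bm{a}) + \mathcal{P}^{\mathsf{T}}_{\bm{s},\bm{a}} \bm{h}^{\star}\}$, gives the mixture decomposition
$$\bm{\Delta}^{\pi_i}(\bm{s}_i,T_i) \leq \beta_i\, \bm{\Delta}^{\mathfrak{T}}(\bm{s}_i,T_i) + (1-\beta_i)\, \bm{\Delta}^{\hat{\pi}_i}(\bm{s}_i,T_i),$$
where the $\mathrm{sp}(\bm{h}^{\star}) \leq H$ slack from the different state-visitation distributions of the two components is absorbed into the lead $H$ factor appearing in REGAL.C's bound.

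Next I would invoke the teacher comparison $\bm{\Delta}^{\mathfrak{T}} \leq \bm{\rho}\,\bm{\Delta}^{\text{REGAL.C}}$ from the definition preceding the theorem together with the identification $\hat{\pi}_i = \mathrm{REGAL.C}(\mathcal{D}_{i-1},T)$ to obtain the per-iteration upper bound
$$\bm{\Delta}^{\pi_i}(\bm{s}_i,T_i) \leq \bigl(\beta_i\bm{\rho} + (1-\beta_i)\bigr)\, \bm{\Delta}^{\text{REGAL.C}}(\bm{s}_i,T_i).$$
I would then instantiate the high-probability regret bound from \citeauthor{bartlett2009regal}~\shortcite{bartlett2009regal}, namely $\bm{\Delta}^{\text{REGAL.C}}(\bm{s},T) = \mathcal{O}\bigl(H|\mathcal{S}|\sqrt{|\mathcal{A}|T\log(|\mathcal{A}|T/\delta)}\bigr)$, which holds simultaneously for all $T$ with probability at least $1-\delta$ because REGAL.C's concentration inequality (Step 3 of Algorithm~\ref{alg:REGAL.C}) is already time-uniform; this avoids inflating the log factor with a union bound over the $m$ iterations.

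Finally, I would uniformize the iteration-dependent coefficient by using the definition $1-\bm{\beta} = \max_{1\leq i \leq m}(1-\beta_i)$ to upper bound $\beta_i\bm{\rho} + (1-\beta_i) \leq \bm{\rho\beta} + (1-\bm{\beta})$ across $i$, then pulling this scalar outside the sum. Summing the REGAL.C pieces and using concavity of $\sqrt{\cdot}$ so that $\sum_i \sqrt{T_i \log(|\mathcal{A}|T_i/\delta)}$ collapses into $\sqrt{T\log(|\mathcal{A}|T/\delta)}$ up to constants yields the stated rate $\mathcal{O}\bigl((1-\bm{\beta}+\bm{\rho\beta})H|\mathcal{S}|\sqrt{|\mathcal{A}|T\log(|\mathcal{A}|T/\delta)}\bigr)$.

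The hard part, I expect, is rigorously justifying the mixture decomposition of the \emph{cumulative} regret in Step~1: a stochastic mixture of two policies has a state-visitation distribution that is itself a mixture, but trajectory-level returns are \emph{not} convex combinations of the two component returns. The standard average-reward remedy is to pass through the bias vector $\bm{h}^{\star}$, so that the discrepancy between the mixture and the true convex combination telescopes and is bounded by $2\,\mathrm{sp}(\bm{h}^{\star}) \leq 2H$ per iteration, which is dominated by REGAL.C's leading term. A secondary subtlety is that the uniformization step silently assumes a monotone relationship between the coefficient and $\bm{\rho}$; one should verify that the chosen $\bm{\beta} = \min_i \beta_i$ is the correct extremal choice for both the good-teacher ($\bm{\rho} < 1$) and bad-teacher ($\bm{\rho} > 1$) regimes, or otherwise weaken the statement with an absolute value on $\bm{\rho}-1$.
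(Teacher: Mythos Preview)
Your outline lands on the same endpoint but the mechanics differ from the paper's proof in two places that matter.

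\textbf{Randomized decomposition plus Hoeffding versus your deterministic mixture bound.} The paper does \emph{not} pass to expectation on the mixture. Instead it introduces, for each iteration $i$, a Bernoulli indicator $X_i$ with $\Pr[X_i=1]=1-\beta_i$ and writes the realized per-iteration regret as $\Delta_i = X_i\tilde{\Delta}_i + (1-X_i)\Delta^{\mathfrak{T}}_i$, where $\tilde{\Delta}_i$ is the REGAL.C component. After replacing $\Delta^{\mathfrak{T}}_i=\rho_i\tilde{\Delta}_i$ and factoring out $\bm{\rho}=\max_i\rho_i$ from the nonnegative terms $(1-X_i)\tilde{\Delta}_i$, the paper obtains
\[
\sum_{i}\Delta_i \;\leq\; \bm{\rho}\Bigl(\sum_i\tilde{\Delta}_i-\sum_iX_i\tilde{\Delta}_i\Bigr)+\sum_iX_i\tilde{\Delta}_i .
\]
The random sum $\sum_i X_i\tilde{\Delta}_i$ is then controlled by \emph{Hoeffding's inequality} (this is the step advertised in the sketch), giving $\sum_i X_i\tilde{\Delta}_i \leq \sum_i(1-\beta_i)c_i \leq (1-\bm{\beta})\sum_i c_i$ with high probability, where $c_i$ is the dominant visitation term inside the REGAL.C analysis. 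Your ``linearity of expectation'' shortcut bypasses this concentration step entirely; it is cleaner, but it produces an \emph{expected} rather than high-probability inequality at that stage, and it is not the argument the paper runs. Note also that the paper's grouping answers your uniformization worry: because $\max_i\rho_i$ multiplies the manifestly nonnegative quantity $\sum_i(1-X_i)\tilde{\Delta}_i$, the sign issue you flag for $\bm{\rho}>1$ does not arise in their decomposition.

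\textbf{Summing across iterations.} You propose to apply the REGAL.C bound black-box to each $T_i$ and then collapse $\sum_i\sqrt{T_i\log(\cdot)}$ by concavity. That step costs you a $\sqrt{m}$ factor (Cauchy--Schwarz gives $\sum_i\sqrt{T_i}\leq\sqrt{mT}$), which the paper avoids by staying \emph{inside} the REGAL.C proof: it keeps the per-episode quantities $v_i(\bm{s},\bm{a})$ and $N_i(\bm{s},\bm{a})$ explicit and invokes the Auer--Jaksch--Ortner visitation inequality $\sum_i\sum_{\bm{s},\bm{a}} v_i(\bm{s},\bm{a})/\sqrt{N_i(\bm{s},\bm{a})}\leq(\sqrt{2}+1)\sqrt{|\mathcal{S}||\mathcal{A}|T}$ once, globally. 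So the paper treats $\sum_i\tilde{\Delta}_i$ as a single REGAL.C run over horizon $T$, not as $m$ independent runs. If you want your route to match the stated rate you must either open the REGAL.C box the same way or argue that $m$ is a fixed algorithmic constant absorbed by the $\mathcal{O}(\cdot)$.
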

\begin{proof}
	Due to the space limits, we provide a proof sketch. The proof is based on the regret bound of REGAL.C. We introduce the regret ratio to reduce the grand-teacher's regret to the REGAL.C's regret. Then, we apply the Hoeffding's inequality to arrive at the statement of the theorem. 
\end{proof}
Theorem \ref{thm:regalregret} implies that the teacher improves learning as long as it is ``good.'' Namely, if $0<\bm{\rho}\leq 1$, $1-\bm{\beta}+\bm{\rho\beta}\leq 1$, $\bm{\beta}\in[0,1]$ which implies the student can enjoy a fraction of REGAL.C's regret. However, if $\bm{\rho}> 1$, $1-\bm{\beta}+\bm{\rho\beta}> 1$, the student suffers more regret than the original REGAL.C algorithm. This justifies our intuition that good teachers assist learning while poor ones hamper learning. Moreover, if there exists prior knowledge that a teacher has poor performance, it would be better off for the student to neglect its advice as it will suffer extra regret.

If the teacher's $\bm{\rho}=0$, we have the following Corollary:
\begin{corollary}
	\label{co:regalregretoptinalteacher}
	If the teacher is optimal, then with at least a probability of $1-\delta$ the total regret is given by:  
	$
	\bm{\Delta}(\bm{s},T)=\mathcal{O}\left((1-\bm{\beta})H|\mathcal{S}|\sqrt{|\mathcal{A}|T \log \frac{|\mathcal{A}|T}{\delta}}\right).
	$
\end{corollary}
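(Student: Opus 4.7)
The plan is to derive Corollary~\ref{co:regalregretoptinalteacher} as an immediate specialization of Theorem~\ref{thm:regalregret}. First, I would observe that an optimal teacher, by definition, attains the optimal gain in expectation, so $\mathcal{R}^{\mathfrak{T}}(\bm{s},T) = \bm{\lambda}^{\star}T$ and therefore $\bm{\Delta}^{\mathfrak{T}}(\bm{s},T) = 0$ for every starting state $\bm{s} \in \mathcal{S}$ and every horizon $T$. Consequently, the inequality $\bm{\Delta}^{\mathfrak{T}} \leq \bm{\rho}\bm{\Delta}^{\text{REGAL.C}}$ that appears in the hypothesis of Theorem~\ref{thm:regalregret} is satisfied with the choice $\bm{\rho}=0$, regardless of the regret exhibited by REGAL.C on the data-collection phases.

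Second, I would substitute $\bm{\rho}=0$ directly into the bound of Theorem~\ref{thm:regalregret}. The mixing coefficient $(1-\bm{\beta}+\bm{\rho}\bm{\beta})$ collapses to $(1-\bm{\beta})$, whereas the horizon- and MDP-dependent factor $H|\mathcal{S}|\sqrt{|\mathcal{A}|T\log(|\mathcal{A}|T/\delta)}$ is unaffected by this substitution. Since the high-probability $1-\delta$ event in Theorem~\ref{thm:regalregret} comes entirely from REGAL.C's concentration bounds on the empirical transition kernel (together with the Hoeffding step mentioned in the parent proof sketch) and does not depend on the teacher's quality, the same failure probability carries over to the corollary.

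The main point to check, rather than a genuine obstacle, is that the limit $\bm{\rho}\to 0$ is actually admissible. In principle $\bm{\rho}$ is defined as a ratio $\bm{\Delta}^{\mathfrak{T}}/\bm{\Delta}^{\mathfrak{B}}$, which is ill-posed if the denominator vanishes, but in Theorem~\ref{thm:regalregret} it enters only through the one-sided inequality $\bm{\Delta}^{\mathfrak{T}}\leq \bm{\rho}\bm{\Delta}^{\text{REGAL.C}}$. Because the left-hand side is identically zero for an optimal teacher, every nonnegative $\bm{\rho}$ is admissible, and the tightest choice $\bm{\rho}=0$ yields exactly the bound stated in Corollary~\ref{co:regalregretoptinalteacher}. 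No additional concentration argument is required beyond what is already invoked in the proof of Theorem~\ref{thm:regalregret}.
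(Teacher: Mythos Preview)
Your proposal is correct and matches the paper's own proof: both argue that an optimal teacher incurs zero regret, which forces $\bm{\rho}=0$, and then substitute this into the bound of Theorem~\ref{thm:regalregret} so that $(1-\bm{\beta}+\bm{\rho}\bm{\beta})$ collapses to $(1-\bm{\beta})$. The paper phrases it at the per-iteration level ($\Delta^{\mathfrak{T}}_i=0\Rightarrow\rho_i=0$ for every $i$, hence $\bm{\rho}=\max_i\rho_i=0$), while you phrase it via the aggregate inequality $\bm{\Delta}^{\mathfrak{T}}\leq \bm{\rho}\bm{\Delta}^{\text{REGAL.C}}$, but the content is the same.
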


\begin{remark}
	Please note that the above theoretical results are more than a constant improvement to the regret. Notice that $\bm{\beta}$ depends on the number of iterations which can be bounded by $|\mathcal{S}|$ and $|\mathcal{A}|$ of the input MDP $\mathcal{M}$ \cite{auer2009near}. Further, $\bm{\rho}$ depends on the input teacher's policy which is also an input to Algorithm \ref{alg:learning_dagger}. Consequently, it can be shown that these regret improvements exceed simple constant bounds. 
\end{remark}

\section{Negative Transfer}
To formalize the relation to negative transfer, we recognize that the regret ratio can be written as:
\begin{equation}
\rho=\frac{\bm{\Delta}^{\mathfrak{T}}(\bm{s},T)}{\bm{\Delta}^{\mathfrak{B}}(\bm{s},T)}=\frac{\bm{\lambda}^{\star}T - \mathcal{R}^{\mathfrak{T}}(\bm{s},T)}{\bm{\lambda}^{\star}T - \mathcal{R}^{\mathfrak{B}}(\bm{s},T)}
\end{equation}
This suggests that we can estimate the ratio by calculating $\bm{\lambda^{\star}}$ and $\mathcal{R}^{\pi}(\bm{s},T)$, given a policy $\pi$. So, we use $$\rho(\pi_1,\pi_2,T)=\frac{\bm{\lambda}^{\star}T - \mathcal{R}^{\pi_1}(\bm{s},T)}{\bm{\lambda}^{\star}T - \mathcal{R}^{\pi_2}(\bm{s},T)}$$ to denote the regret ratio between policy $\pi_1$ and $\pi_2$ until step $T$. At this stage, we define: 
\begin{itemize}
	\item Negative transfer from policy $\pi_1$ to $\pi_2$ until $T$ steps: $\rho(\pi_1,\pi_2,T)>1$.
	\item Positive transfer from policy $\pi_1$ to $\pi_2$ until $T$ steps: $\rho(\pi_1,\pi_2,T)\leq  1$
\end{itemize}

To formalize negative transfer, our goal at this stage is to relate $\rho(\cdot)$ to a metric between source and target tasks. For that sake, we define: 
$
d_t^s(\pi_s)=\hat{\mathcal{R}}^{\pi_s}_s(\bm{s},T)- \hat{\mathcal{R}}^{\pi_s}_t(\bm{s},T),
$
with $\hat{\mathcal{R}}^{\pi_s}_s(\bm{s},T)$ and $\hat{\mathcal{R}}^{\pi_s}_t(\bm{s},T)$ being the agent's estimates of the rewards in the source and the target after $T$ steps. Consequently, an estimate $\hat{\rho}$ to $\rho$ can be derived as: 
\begin{align*}
&\hat{\rho}(\pi_s,\pi_t,T)\\
&=\frac{\bm{\lambda}^{\star}T - \hat{\mathcal{R}}^{\pi_s}_t(\bm{s},T)}{\bm{\lambda}^{\star}T - \hat{\mathcal{R}}^{\pi_t}_t(\bm{s},T)} \\
&=\frac{\bm{\lambda}^{\star}T +\left(\hat{\mathcal{R}}^{\pi_s}_s(\bm{s},T)- \hat{\mathcal{R}}^{\pi_s}_t(\bm{s},T)\right)- \hat{\mathcal{R}}^{\pi_s}_s(\bm{s},T)}{\bm{\lambda}^{\star}T -\hat{\mathcal{R}}^{\pi_t}_t(\bm{s},T)} \\
&=\frac{\bm{\lambda}^{\star}T +d_t^s(\pi_s)- \hat{\mathcal{R}}^{\pi_s}_s(\bm{s},T)}{\bm{\lambda}^{\star}T -\hat{\mathcal{R}}^{\pi_t}_t(\bm{s},T)}.
\end{align*}

$\hat{\mathcal{R}}^{\pi_s}_s(\bm{s},T)$ and $\hat{\mathcal{R}}^{\pi_t}_t(\bm{s},T)$ can be bounded by the Empirical Bernstein bound \cite{audibert2007tuning}. With a probability $1-\delta$, we have  
\begin{equation*}
\left| \hat{\mathcal{R}}^{\pi_s}_s(\bm{s},T) -\mathbb{E}_{\pi_s}\left[\sum_{t=0}^{T} \mathcal{R}_s (\bm{s}_{t},\bm{a}_{t})\right] \right| \leq \epsilon_1,
\end{equation*}
with $\epsilon_1=\bar{\sigma}\sqrt{\frac{2\log(3/\delta)}{n_s}}+\frac{6R_{max}\log(3/\delta)}{n_s}$, $\bar{\sigma}=\sqrt{1/n_s\sum_{i=1}^{n_s} (R_i-\bar{R})^2 }$ is the standard deviation of the sample , we derive 
\begin{equation}
\frac{\bm{\lambda}^{\star}T +d_t^s(\pi_s)- C_2}{\bm{\lambda}^{\star}T -C_4}\leq \hat{\rho}(\pi_s,\pi_t,T)\leq \frac{\bm{\lambda}^{\star}T +d_t^s(\pi_s)- C_1}{\bm{\lambda}^{\star}T -C_3}
\end{equation}
with $C_1, C_2, C_3,$ and $C_4$ are constants. Consequently, for negative transfer:
$$\hat{\rho}(\pi_s,\pi_t,T) \geq \frac{\bm{\lambda}^{\star}T +d_t^s(\pi_s)- C_2}{\bm{\lambda}^{\star}T -C_4}> 1.$$ Then, assuming enough samples, negative transfer occurs if: 
\begin{align*}
&d_{t}^{s}(\pi_{s}) \\
&> \left\{\mathbb{E}_{\pi_s}\left[\sum_{t=0}^{T} \mathcal{R}_s (\bm{s}_{t},\bm{a}_{t})\right]-\mathbb{E}_{\pi_t}\left[\sum_{t=0}^{T} \mathcal{R}_t (\bm{s}_{t},\bm{a}_{t})\right]\right\} \numberthis \label{eq:condition}
\end{align*}

The condition sheds light on the negative transfer in the sense of metric notation and provides a formal way to determine negative transfer. First, if the condition in Eq. \ref{eq:condition} holds after evaluation, researchers should avoid the source policy $\pi_s$ to the target tasks since it may cause negative transfer. Second, if 
the researchers have enough expert knowledge about their working domain and transfer information, usually they can avoid this evaluation phase in practice. 
In short, Eq. \ref{eq:condition} provides a formal way to understand negative transfer and justify the intuition (adopt high quality source knowledge and avoid bad teachers ) in the transfer practice.

\section{Experimental Results}
\begin{figure*}[t]
	\centering
	\begin{subfigure}{.6\columnwidth}
		\centering
		\includegraphics[width=\columnwidth]{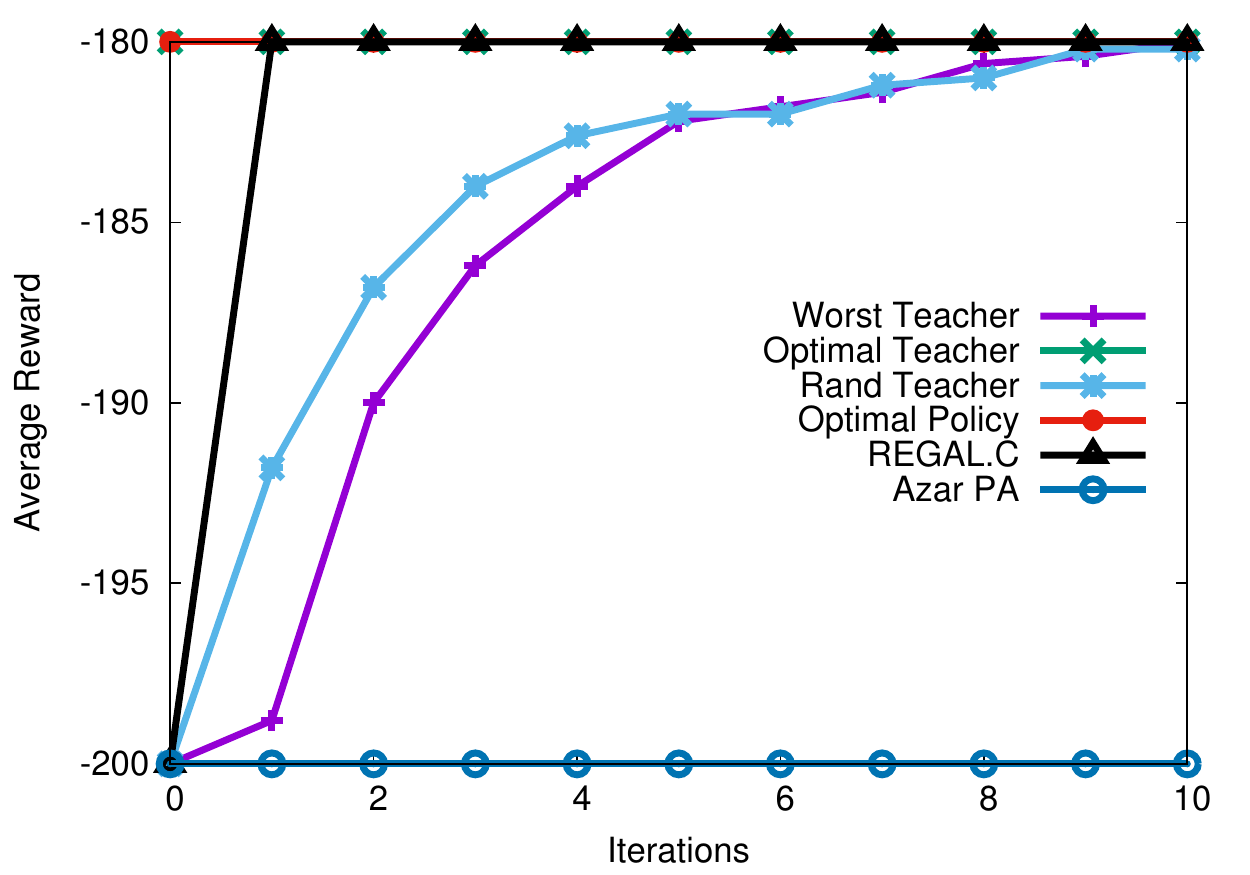}
		\caption{Grid World}
		\label{fig:gridworld}
	\end{subfigure}
	\begin{subfigure}{.6\columnwidth}
		\centering
		\includegraphics[width=\columnwidth]{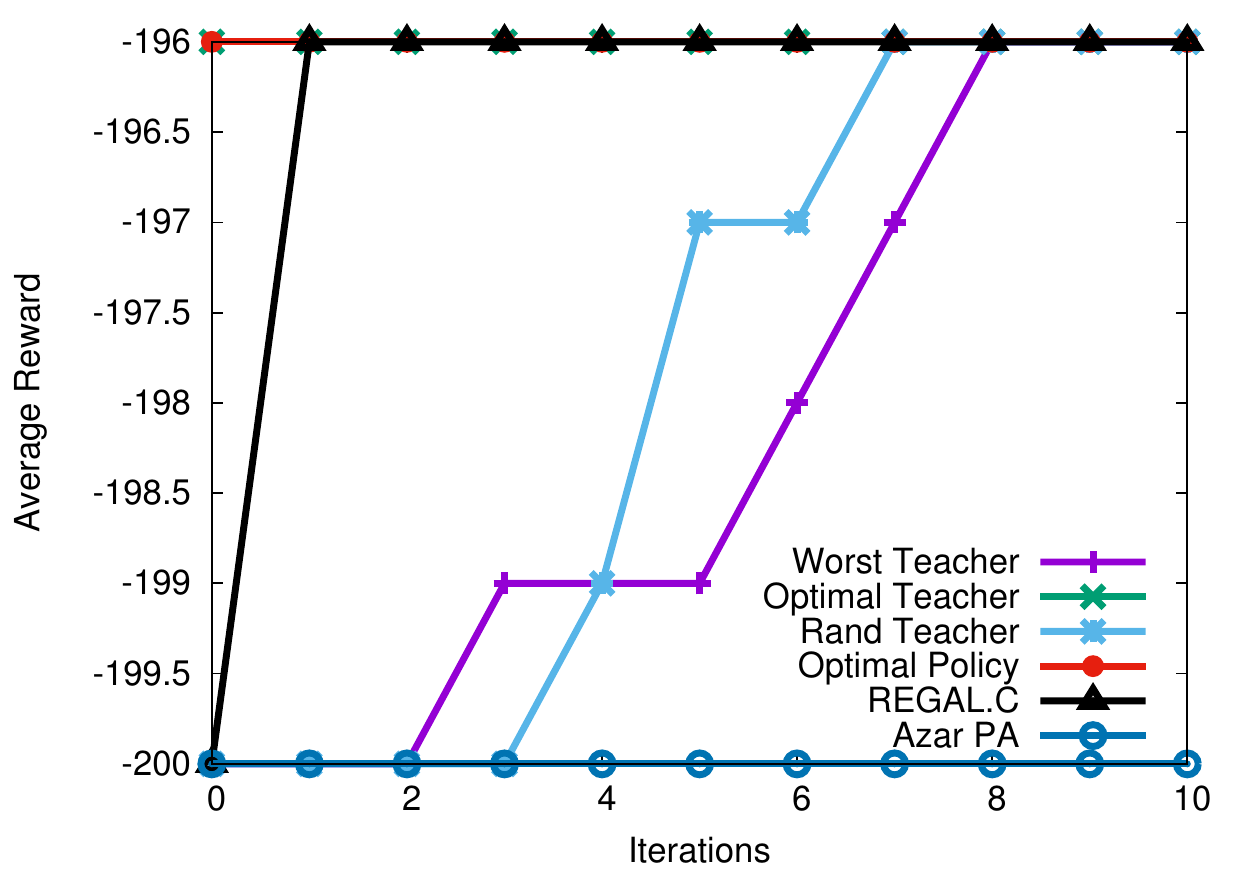}
		\caption{Combination Lock}
		\label{fig:comblock}
	\end{subfigure}
	\begin{subfigure}{.6\columnwidth}
		\centering
		\includegraphics[width=\columnwidth]{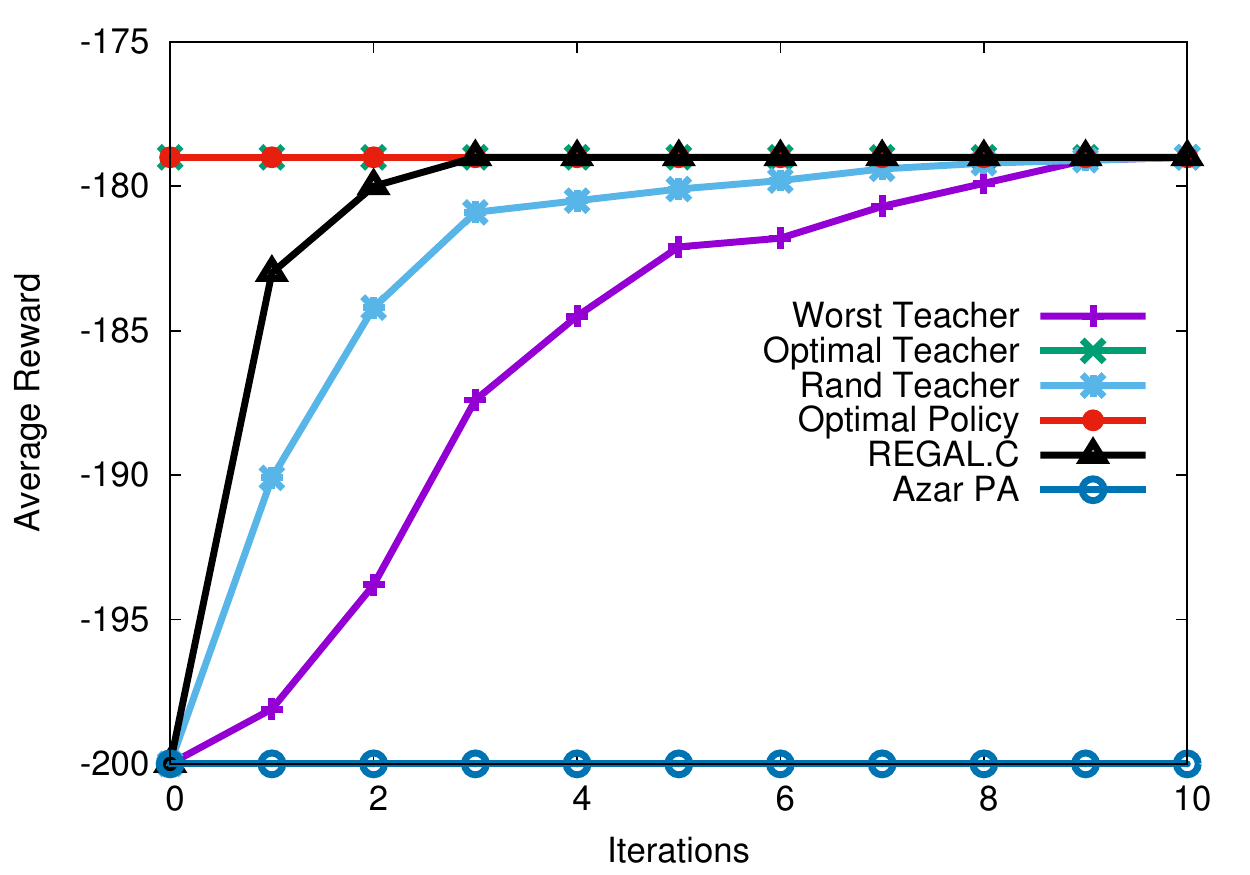}
		\caption{Block Dude}
		\label{fig:blockdude}
	\end{subfigure}
	\caption{Our method with optimal teacher has similar performance as the optimal policy. And the REGAL algorithm (no advice) outperform random teacher and worst teacher group which justifies that the poorer teachers do harm the learning. Azar's method depends on the quality of the teachers --- when the teachers are very poor, the algorithm shows no learning.}
	\label{fig:avaragereawrd}
	\vspace{-5pt}
\end{figure*}

Given the above theoretical successes, this section provides empirical validation on three domains:

\textbf{Combination Lock:} We use the domain described in Figure \ref{fig:mdpexample} which is a variation from \cite{whitehead1991complexity}. The experimental setting follows the caption description.

\textbf{Grid World} is an RL benchmark in which an agent has to navigate an $m \times m$ grid world with the goal of reaching a goal state. We employ an $11 \times 11$ grid world with a four room layout as introduced in~\citeauthor{sutton1998introduction}~\shortcite{sutton1998introduction}. The agent begins in the lower left corner of the map and navigates to the goal state being the upper right corner. To navigate, the agent has access (in each cell) to four actions transitioning it to the: north, south, west and east. Applying an action, it then transitions in that direction with a probability of 0.8 and in the other three with a probability of 0.2. In case the direction is blocked, the agent stays in the same state. Finally, the agent receives a reward of $0$ once reaching the goal state and a reward of $-1$ in all others.

\textbf{Block Dude} is a game where an agent again navigates a maze to reach a goal state. Reaching the goal directly is impossible due to the presence of blocks restricting its movement. The agent, however, can move to the left, right, and upwards. To reach the goal state, it needs to pick-up blocks and relocate them in correct positions. We use the default level 1 BURLAP \cite{burlap} in which there are two blocks and $3 \times 25$ maze. The agent receives a reward of $+1$ in the goal state and a reward of $-1$ in all other states.

\begin{figure}
	\centering
	\includegraphics[width=.7\linewidth]{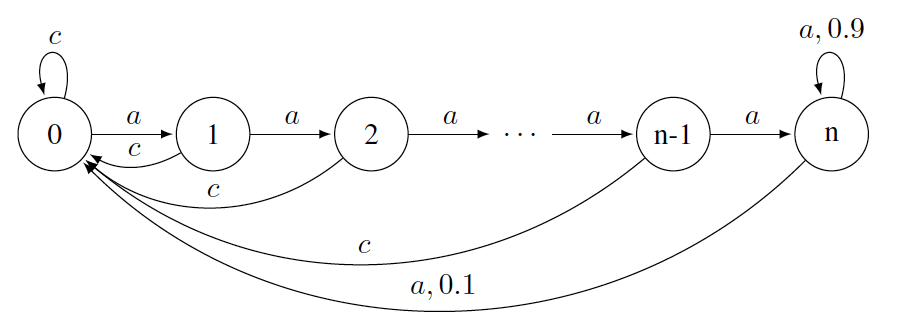}
	\caption{
		There are $n+1$ states in the MDP. The last state has only one action and the rest have two. The agent receives reward $-1$ for all actions, except when taking action $a$ in state $n$, $\mathcal{R}(n,a)=1$. The agent stays in state $n$ with probability $0.9$ and goes to state $0$ with probability $0.1$. The optimal policy is to take action $a$ in each state. Since there are $n+1$ states, the budget $\mathfrak{B}$ is at least $n+1$ to achieve zero regret. 
	}
	\label{fig:mdpexample}
\end{figure}

\subsection{Experimental Setup \& Results}
To construct the grand teacher, we set the total number of teachers $k=10$. For each teacher, the budget, $\bm{b}_{i}$, is set to the total number of states. In Algorithm \ref{alg:learning_dagger}, the maximum number of iterations and the size of each dataset, $\mathcal{D}_i$, were set to $10$ and  $200$, respectively. Values of $p^{i}=0.5^{i}$ for $i=1,\dots, 10$ were used to determine $\bm{\beta}_i=p^i$. For Algorithm~\ref{alg:REGAL.C}, the confidence $\bm{\delta}$ was set to $0.8$ and $H$ to $1000$. The optimal gain $\lambda^{\star}$ and the optimal bias vector $\bm{h}$ can be approximated using the value function $\mathcal{V}$ \cite{puterman2005markov}. Let $\mathcal{V}^l$ be the value function at iteration $l$, $l=0,1,\dots$. The optimal gain $\bm{\lambda}^{\star}\approx \text{sp}(\mathcal{V}^{l+1}-\mathcal{V}^{l})$, where $\text{sp}(\mathcal{V}^{l})=\max_{\bm{s}\in \mathcal{S}}\mathcal{V}^{l}(\bm{s})-\min_{\bm{s}\in \mathcal{S}}\mathcal{V}^l(\bm{s})$ and the optimal bias vector $\bm{h}^{\star}\approx \mathcal{V}^{l}-l\bm{\lambda}^{l}$, when $l$ is large enough. To smooth the natural variance in the student's performance, each learning curve is averaged over $10$ independent trials of student learning. To better evaluate our method, we adopt six experimental settings by considering different teachers and learning algorithms. For teachers, we consider three forms. The first, referred to as ``optimal teacher'' provides optimal actions and is used by the grand teachers. The second, referred to as ``worst teacher'' advices the student to take actions with the lowest Q-values, while the last randomly selects action suggestions from the set of allowed moves. We also compare our method to REGAL.C (no advice), optimal policy (without learning), and Azar's method~\cite{azar2013regret}. Please note that Azar's method can not converge to the optimal policy and suffers loss as its performance is restricted by the teacher. 

Performance, measured by the average reward, is reported in Figure \ref{fig:avaragereawrd}. First, it is clear that given optimal teachers, our method exactly traces the optimal policy achieving a regret of 0. It is also important to note that in all three domains, even if the teacher was not optimal, and contrary to current techniques, our method is capable of acquiring optimal behavior. This is achievable as our method allows for learning within the multiple teacher framework. 



\section{Related Work on Transfer Learning}
Few theoretical results on transfer and policy advice have been achieved. Closest to this work is that in~\citeauthor{torrey2013teaching}~\shortcite{torrey2013teaching}, where the authors only provide empirical validations to their approach without drawing on any theoretical analysis. Given the theoretical derivations in this paper, we in fact note that the method~\cite{torrey2013teaching} is a special case of ours considering only one-teacher advice models.

Another method considering advice under multiple teachers is that in~\citeauthor{azar2013regret}~\shortcite{azar2013regret}. \citeauthor{azar2013regret} propose a method capable of selecting the best policy from a set of teacher policies and derive sub-linear regret of the form $\mathcal{O}(\sqrt{T})$ with $T$ being the total number of rounds. One drawback of their method, however, is the assumption of a ``good-enough'' teacher which can guide the student to optimality. Such a method may suffer huge regret if the overall quality of teacher policies is poor. It also can not obtain better policies than those of the teacher. Our algorithm remedies these problems by allowing agents to further improve, which gives them the opportunity to surpass the teacher's performance. 

Human advice is also a good source of policy advice. Usually, this method adopts the human advice as the teacher's policy to improve the learning performance. However, these works focus on empirical validations \cite{cakmak2012algorithmic,griffith2013policy}. 

Probabilistic policy reuse is similar to our method in which the algorithm follows its own knowledge with probability $1-\epsilon$ and teacher's policy with probability $\epsilon$ \cite{fernandez2006probabilistic}. However, $\epsilon$ is not decaying over time, making the algorithm divergent if teacher policies are not optimal. 
\citeauthor{isbell:policy:2015} introduce a policy shaping algorithm using human teachers, but focus on providing rewards rather than action advice \cite{isbell:policy:2015}. Both of these works rely solely on empirical results.

Work on transfer for RL is also related to this paper, where we can consider policy advice as an instance of transferring from teachers to students~\cite{lazaric2012transfer}. Here, \citeauthor{ferrante2008transfer}, for instance, propose a method to transfer high quality samples from source to target tasks using bi-simulation measures~\cite{ferrante2008transfer}. Their method only transfers samples once, while our approach gradually provides advice to the student. Due to space constraints, we refer the reader to~\citeauthor{taylor2009transfer}~\shortcite{taylor2009transfer} for a comprehensive survey.

Lifelong reinforcement learning has drawn significant attention to the transfer community recently. Brunskill and Li studied online discovery problems in a lifelong learning setting~\cite{DBLP:journals/corr/BrunskillL15}. Bou-Ammar \textit{et al.} also studied such a problem and introduced constraints on the policy to compute ``safe'' policies \cite{DBLP:journals/corr/Bou-AmmarTE15}. Contrary to these works, in this paper, we focus on the single agent setting operating within one task. 

Finally, Learning from Demonstration  \cite{argall2009survey} (LfD) is also related to our work, but LfD usually assumes that the expert is optimal and the student only tries to mimic the expert. 

\section{Conclusion and Future Work}
\label{sec:conclusion}
In this paper, we formally defined the multi-teacher advice model and introduced a new algorithm which leverages teacher and student's own knowledge in the weakly communicating MDPs. We theoretically analyzed our algorithm and showed, for the first time, that the agent can achieve optimality even when starting from non-optimal teachers. Our results provide a theoretical justification for the intuition that ``bad'' teachers can hurt the learning process of the student. Also, we formally established the condition of negative transfer, shedding light on future transfer learning research, where for example, researchers can choose ``good teachers'' based on the Eq \ref{eq:condition} and avoid negative transfer with prior expert knowledge.

In future, we plan on adopting other online reinforcement learning algorithms (e.g., REGAL.D \cite{bartlett2009regal}, R-max \cite{brafman2003r}, or $E^3$ \cite{kearns2002near}) to replace REGAL.C. We will provide better methods to construct the ``grand-teacher'' without exploring the whole MDP. Also, extensions to large-scale MDPs may be an interesting direction for future research as well.

\section{Acknowledgements}
This research has taken place in part at the Intelligent Robot
Learning (IRL) Lab, Washington State University. IRL research
is supported in part by grants AFRL FA8750-14-1-
0069, AFRL FA8750-14-1-0070, NSF IIS-1149917, NSF IIS-
1319412, USDA 2014-67021-22174, and a Google Research
Award. 

\bibliographystyle{named}
\bibliography{General}
\appendix

\section{Proof of Theorem 1}
\begin{proof}[\textbf{Proof of Theorem 1}]
	Let $\mathcal{Z}_i^r$ denotes the event that the $i$-th state, $s_i$, is not visited in the first $r$ explorations.  $$P[\mathcal{Z}_i^r]=(1-\frac{1}{|\mathcal{S}|})^r\leq e^{-r/|\mathcal{S}|}.$$ If we choose $r=c|\mathcal{S}|\log |\mathcal{S}|$, where $c$ is a constant,
	$$P[\mathcal{Z}_i^r]\leq e^{-r/|\mathcal{S}|}=e^{-c\log |\mathcal{S}|}=|\mathcal{S}|^{-c}.$$
	Let $\mathcal{T}$ be the number of steps at least one of all state is not visited.
	\begin{align*}
	P[\mathcal{T} > c|\mathcal{S}|\log |\mathcal{S}|]&=P\left[\exists i: \mathcal{Z}_i^r \text{ s. t. } r=c|\mathcal{S}|\log |\mathcal{S}|\right] \\
	&=P\left[\bigcup_{i=1}^{|\mathcal{S}|}  \mathcal{Z}_i^{c|\mathcal{S}|\log |\mathcal{S}|}\right] \\
	\shortintertext{Apply the union bound}
	&\leq \bigcup_{i=1}^{|\mathcal{S}|} P\left[ \mathcal{Z}_i^{c|\mathcal{S}|\log |\mathcal{S}|}\right] \\
	&\leq|\mathcal{S}|\cdot |\mathcal{S}|^{-c}=|\mathcal{S}|^{1-c}
	\end{align*}
	Set $|\mathcal{S}|^{1-c}=\delta$, we have $$\mathcal{T}\leq |\mathcal{S}| \log \frac{|\mathcal{S}|}{\delta}=O\left(|\mathcal{S}| \log \frac{|\mathcal{S}|}{\delta}\right)$$
	with probability at least $1-\delta$.
\end{proof}
\section{Proofs of Theorem 2 and Corollary 1}
Next, we will prove Theorem \ref{thm:regalregret}. To review some notation, we use $N_i(\bm{s},\bm{a})$ to denote the number of times a state-action pair $(\bm{s},\bm{a})$ at iteration $i$. And $v_i(\bm{s},\bm{a})=N_{i+1}(\bm{s},\bm{a})-N_{i}(\bm{s},\bm{a})$ denotes the number of times a state-action pair $(\bm{s},\bm{a})$ is visited during iteration $i$.
Let $\Delta_i$ be the regret incurred in iteration $i$,
\begin{equation}
\label{eq:expectedregret}
\Delta_i=\sum_{\bm{s},\bm{a}}v_i(\bm{s},\bm{a})\left(\bm{\lambda}^*-\mathcal{R}(\bm{s},\bm{a})\right).
\end{equation}
The total regret equals
$$\sum_{i=1}^m\Delta_i,$$ where $m$ is the number of iterations in Algorithm 3. Auer \textit{et al.} show that $m\leq |\mathcal{S}||\mathcal{A}|\log (8T/|\mathcal{S}||\mathcal{A}|)$ if $T\geq |\mathcal{S}||\mathcal{A}|$ \cite{auer2009near}. Let
\begin{equation*}
X_i=\begin{cases}
1& \text{with probabiltiy } 1-\beta_i \\
0& \text{with probabiltiy } \beta_i. 
\end{cases}
\end{equation*}
That is, $X_i$ is the indicator random variable for the iteration $i$.
\begin{lemma}
	\label{lm:decompsedregret}
	Consider an iteration $i$. Then, we can decompose the regret as two components, $$\Delta_i=X_i\tilde{\Delta}_i+(1-X_i)\Delta^{\mathfrak{T}}_i,$$ where $\tilde{\Delta}_i$ and $\Delta^{\mathfrak{T}}_i$ are the regret incurred in iteration $i$ following the policy $\pi_i$ and the grand-teacher policy $\pi^{\mathfrak{T}}$, respectively.	
\end{lemma}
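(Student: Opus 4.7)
The plan is to reinterpret the mixture policy $\pi_i=\beta_i\pi^{\mathfrak{T}}+(1-\beta_i)\hat{\pi}_i$ as a two-stage sampling procedure: at the start of iteration $i$ we draw a single Bernoulli indicator $X_i$ with $\Pr[X_i=1]=1-\beta_i$ and $\Pr[X_i=0]=\beta_i$, and we then run either $\hat{\pi}_i$ for the whole iteration if $X_i=1$ or $\pi^{\mathfrak{T}}$ if $X_i=0$. This is the standard DAGGER-style view, and because the induced distribution over trajectories within the iteration coincides with the one produced by $\pi_i$ in expectation, we are free to carry it into the regret analysis.

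First, I would introduce auxiliary visit counts $\tilde v_i(\bm{s},\bm{a})$ and $v^{\mathfrak{T}}_i(\bm{s},\bm{a})$ denoting the number of visits to $(\bm{s},\bm{a})$ in iteration $i$ under $\hat{\pi}_i$ and under $\pi^{\mathfrak{T}}$ respectively, each run for $T_i$ steps. Under the two-stage sampling the realized counts satisfy
\begin{equation*}
v_i(\bm{s},\bm{a})=X_i\,\tilde v_i(\bm{s},\bm{a})+(1-X_i)\,v^{\mathfrak{T}}_i(\bm{s},\bm{a}).
\end{equation*}
Substituting this identity into the per-iteration regret formula (Eq.~\ref{eq:expectedregret}) and using $X_i(1-X_i)=0$ to split the single sum into two, I obtain
\begin{equation*}
\Delta_i=X_i\sum_{\bm{s},\bm{a}}\tilde v_i(\bm{s},\bm{a})\bigl(\bm{\lambda}^{\star}-\mathcal{R}(\bm{s},\bm{a})\bigr)+(1-X_i)\sum_{\bm{s},\bm{a}}v^{\mathfrak{T}}_i(\bm{s},\bm{a})\bigl(\bm{\lambda}^{\star}-\mathcal{R}(\bm{s},\bm{a})\bigr).
\end{equation*}
By the definition of $\tilde\Delta_i$ and $\Delta^{\mathfrak{T}}_i$ as the regrets accrued in iteration $i$ under $\hat{\pi}_i$ and $\pi^{\mathfrak{T}}$ respectively, the two sums are exactly $\tilde\Delta_i$ and $\Delta^{\mathfrak{T}}_i$, which yields the claimed decomposition.

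The only real subtlety is justifying the one-coin-per-iteration view of the mixture policy. Strictly speaking, a mixture $\beta_i\pi^{\mathfrak{T}}+(1-\beta_i)\hat{\pi}_i$ is typically applied per action, not per iteration, so conflating all of that randomness into a single $X_i$ requires either adopting this convention as part of the algorithm's execution (which is what the excerpt appears to do, since the lemma states the decomposition as an identity rather than an expectation) or invoking linearity of expectation to argue that both implementations produce matching expected visit counts, hence matching expected regret. I would state this up front so that the identity $v_i=X_i\tilde v_i+(1-X_i)v^{\mathfrak{T}}_i$ is unambiguous, after which the algebra is routine. This bookkeeping step is the main obstacle; the arithmetic separation into the two regret components is essentially immediate.
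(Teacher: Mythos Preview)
Your proposal is correct and follows essentially the same approach as the paper: both adopt the one-coin-per-iteration interpretation of the mixture $\beta_i\pi^{\mathfrak{T}}+(1-\beta_i)\hat{\pi}_i$, substitute into the per-iteration regret formula, factor out the Bernoulli indicator $X_i$ (which is constant across states in an iteration), and identify the two resulting sums as $\tilde{\Delta}_i$ and $\Delta^{\mathfrak{T}}_i$. Your version is in fact cleaner---you introduce separate visit counts $\tilde v_i$ and $v^{\mathfrak{T}}_i$ and explicitly flag the per-iteration versus per-action subtlety, whereas the paper writes the split directly in terms of $v_i(\bm{s},\pi_i(\bm{s}))$ and $v_i(\bm{s},\pi^{\mathfrak{T}}(\bm{s}))$ and simply notes that ``$X_i$ is not related to state $\bm{s}$.''
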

\begin{proof}
	According to Equation (\ref{eq:expectedregret}), we have
	\begin{align*}
	\Delta_i&=\sum_{\bm{s},\bm{a}}v_i(\bm{s},\bm{a})\left(\bm{\lambda}^*-\mathcal{R}(\bm{s},\bm{a})\right)\\
	\shortintertext{At each decision step, the student agent either follows $\pi_i$ or the teacher $\pi^{\mathfrak{T}}$}
	&=\left(\sum_{\bm{s}}X_iv_i(\bm{s},\pi_i(\bm{s}))\left(\bm{\lambda}^*-\mathcal{R}(\bm{s},\pi_i(\bm{s}))\right)\right) \\
	&\mspace{20mu}+\left(\sum_{\bm{s}}(1-X_i)v_i(\bm{s},\pi^{\mathfrak{T}}(\bm{s}))\left(\bm{\lambda}^*-\mathcal{R}(\bm{s},\pi^{\mathfrak{T}}(\bm{s}))\right)\right) \\
	\shortintertext{$X_i$ is not related to state $\bm{s}$}
	&=X_i\left(\sum_{\bm{s}}v_i(\bm{s},\pi_i(\bm{s}))\left(\bm{\lambda}^*-\mathcal{R}(\bm{s},\pi_i(\bm{s}))\right)\right) \\
	&\mspace{20mu}+(1-X_i)\left(\sum_{\bm{s}}v_i(\bm{s},\pi^{\mathfrak{T}}(\bm{s}))\left(\bm{\lambda}^*-\mathcal{R}(\bm{s},\pi^{\mathfrak{T}}(\bm{s}))\right)\right) \\
	&=X_i\tilde{\Delta}_i+(1-X_i)\Delta^{\mathfrak{T}}_i
	\end{align*}
	where $\tilde{\Delta}_i=\sum_{\bm{s}}v_i(\bm{s},\pi_i(\bm{s}))\left(\lambda^*-\mathcal{R}(\bm{s},\pi_i(\bm{s}))\right)$ is the regret that the student agent only follows $\pi_i$ and $\Delta^{\mathfrak{T}}_i=\sum_{\bm{s}}v_i(\bm{s},\pi^{\mathfrak{T}}(\bm{s}))\left(\bm{\lambda}^*-\mathcal{R}(\bm{s},\pi^{\mathfrak{T}}(\bm{s}))\right)$ is the regret that the student agent only follows the grand teacher's advice $\pi^{\mathfrak{T}}$.
\end{proof}

\begin{lemma}
	\label{lm:goodmdpregret}
	The total regret $\sum_{i=1}^m\Delta_i$ has following upper bound,
	\begin{equation}
	\label{eq:twoparts}
	\sum_{i=1}^m\Delta_i\leq \max_{1\leq i\leq m}\{\rho_i\}\left(\sum_{i=1}^m\tilde{\Delta}_i-\sum_{i=1}^mX_i\tilde{\Delta}_i\right)+\sum_{i=1}^mX_i\tilde{\Delta}_i,
	\end{equation}
	where $\rho_i\geq 0$ is the ratio such that $\Delta^{\mathfrak{T}}_i=\rho_i\tilde{\Delta}_i$, $i=1,\dots,m$.
\end{lemma}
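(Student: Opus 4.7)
The plan is to start from the per-iteration decomposition already established in Lemma \ref{lm:decompsedregret}, namely $\Delta_i = X_i\tilde{\Delta}_i + (1-X_i)\Delta^{\mathfrak{T}}_i$, and then eliminate the teacher's regret $\Delta^{\mathfrak{T}}_i$ by invoking the defining relation $\Delta^{\mathfrak{T}}_i = \rho_i \tilde{\Delta}_i$. After this substitution each term becomes $\Delta_i = X_i \tilde{\Delta}_i + (1-X_i)\rho_i \tilde{\Delta}_i$, which is a scalar combination of $\tilde{\Delta}_i$ only; summing over $i=1,\dots,m$ then yields
\begin{equation*}
\sum_{i=1}^{m}\Delta_i = \sum_{i=1}^{m} X_i \tilde{\Delta}_i + \sum_{i=1}^{m} (1-X_i)\rho_i \tilde{\Delta}_i.
\end{equation*}

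Next, I would replace each $\rho_i$ in the second sum by the uniform upper bound $\max_{1\le i\le m}\{\rho_i\}$. This step is where I need to be careful: pulling the maximum out of the sum is only valid if the factors $(1-X_i)\tilde{\Delta}_i$ are nonnegative. The indicator $1-X_i \in \{0,1\}$ is automatically nonnegative, and $\tilde{\Delta}_i \geq 0$ because it is the regret of running $\pi_i$ (in a weakly communicating MDP, $\bm{\lambda}^\star$ is the maximum achievable average reward, so each per-step gap $\bm{\lambda}^\star - \mathcal{R}(\bm{s},\pi_i(\bm{s}))$ is nonnegative, and $v_i(\bm{s},\bm{a})\ge 0$). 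So the inequality direction is preserved, giving
\begin{equation*}
\sum_{i=1}^{m}\Delta_i \leq \sum_{i=1}^{m} X_i \tilde{\Delta}_i + \max_{1\le i\le m}\{\rho_i\}\sum_{i=1}^{m} (1-X_i)\tilde{\Delta}_i.
\end{equation*}

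Finally, I would rewrite $\sum_i (1-X_i)\tilde{\Delta}_i = \sum_i \tilde{\Delta}_i - \sum_i X_i \tilde{\Delta}_i$, which plugs in directly to produce the stated bound. The main (and essentially only) obstacle is the sign check on $\tilde{\Delta}_i$ justifying the monotone replacement by $\max_i \rho_i$; once that is in place, the remainder is purely algebraic rearrangement. No concentration or MDP-structural arguments are required at this stage, since Lemma \ref{lm:decompsedregret} has already absorbed the coupling between the randomized policy mixture and the regret.
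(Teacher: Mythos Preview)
Your proposal is correct and follows essentially the same route as the paper: invoke Lemma~\ref{lm:decompsedregret}, substitute $\Delta^{\mathfrak{T}}_i=\rho_i\tilde{\Delta}_i$, sum, and then replace each $\rho_i$ by $\max_i\rho_i$ using nonnegativity of $(1-X_i)\tilde{\Delta}_i$. If anything, you are more explicit than the paper about the sign check needed to pull out $\max_i\rho_i$; just be aware that the per-step gap $\bm{\lambda}^\star-\mathcal{R}(\bm{s},\pi_i(\bm{s}))$ need not be nonnegative termwise, so the justification should rest on $\tilde{\Delta}_i\ge 0$ as an iteration-level regret rather than on each summand.
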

\begin{proof}
	Consider an iteration $i$. Lemma \ref{lm:decompsedregret} implies $$\Delta_i=X_i\tilde{\Delta}_i+(1-X_i)\Delta^{\mathfrak{T}}_i.$$ Therefore,
	\begin{align*}
	\sum_{i=1}^m\Delta_i&=\sum_{i=1}^m\left(X_i\tilde{\Delta}_i+(1-X_i)\Delta^{\mathfrak{T}}_i\right) \\
	&=\sum_{i=1}^m\left(X_i\tilde{\Delta}_i+(1-X_i)\rho_i\tilde{\Delta}_i\right)  \\
	\shortintertext{Assume that $\Delta^{\mathfrak{T}}_i=\rho_i\tilde{\Delta}_i$. Note that we introduce the regret ratio $\rho_i$ in Definition 2.}
	&=\sum_{i=1}^m\left(\rho_i\tilde{\Delta}_i+X_i\tilde{\Delta}_i-X_i\rho_i\tilde{\Delta}_i\right)\\
	&=\sum_{i=1}^m\rho_i\tilde{\Delta}_i-\sum_{i=1}^mX_i\rho_i\tilde{\Delta}_i+\sum_{i=1}^mX_i\tilde{\Delta}_i \\
	&\leq\max_{1\leq i\leq m}\{\rho_i\}\left(\sum_{i=1}^m\tilde{\Delta}_i-\sum_{i=1}^mX_i\tilde{\Delta}_i\right)+\sum_{i=1}^mX_i\tilde{\Delta}_i
	\end{align*}
	Hence, we will bound $\sum_{i=1}^m\tilde{\Delta}_i$ and $\sum_{i=1}^mX_i\tilde{\Delta}_i$, separately.
\end{proof}

\begin{lemma}[Theorem $2$ in \cite{bartlett2009regal} ]
	\label{lm:regalregret1}
	With probability at least $1-\delta$, the total regret $\sum_{i=1}^m\tilde{\Delta}_i$ of REGAL.C algorithm satisfies $$\sum_{i=1}^m\tilde{\Delta}_i= \mathcal{O}\left(H|\mathcal{S}|\sqrt{|\mathcal{A}|T\log (|\mathcal{A}|T/\delta)}\right),$$ where $0<\beta_i<1$ is the decaying variable in Algorithm 3. 
\end{lemma}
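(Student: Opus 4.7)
The plan is to follow the optimism-under-uncertainty template specialized to the bias-span-constrained setting of REGAL.C. First, I would build the good event on which the true MDP $\mathcal{M}$ lies in the confidence set $\mathcal{M}^i$ for every iteration $i$. This follows from an $L_1$ concentration inequality for $\|P_{\bm{s},\bm{a}} - \hat{P}^t_{\bm{s},\bm{a}}\|_1$ (of Weissman type) combined with a union bound over $(\bm{s},\bm{a})$ and over a geometric grid of time indices; the confidence radius appearing in Algorithm~\ref{alg:REGAL.C} is calibrated for exactly this event, which has probability at least $1-\delta$. Because $H \geq \text{sp}(\bm{h}^{\star}(\mathcal{M}))$, the true MDP is feasible for the constrained maximization in line 4 of Algorithm~\ref{alg:REGAL.C}, so on the good event the selected MDP is optimistic in the sense $\bm{\lambda}^{\star}(M^i) \geq \bm{\lambda}^{\star}$.

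Next, fix iteration $i$ and let $\tilde{P}$ denote the transitions of $M^i$. Starting from $\tilde{\Delta}_i \leq \sum_{\bm{s},\bm{a}} v_i(\bm{s},\bm{a})\bigl(\bm{\lambda}^{\star}(M^i) - \mathcal{R}(\bm{s},\bm{a})\bigr)$, I would invoke the Bellman equation for $M^i$, $\bm{h}^{\star}(\bm{s};M^i) + \bm{\lambda}^{\star}(M^i) = \mathcal{R}(\bm{s},\hat{\pi}_{i+1}(\bm{s})) + \tilde{P}^{\mathsf{T}}_{\bm{s},\hat{\pi}_{i+1}(\bm{s})}\bm{h}^{\star}(M^i)$, to rewrite each per-step surplus as $\tilde{P}^{\mathsf{T}}_{\bm{s},\bm{a}}\bm{h}^{\star}(M^i) - \bm{h}^{\star}(\bm{s};M^i)$. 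Adding and subtracting both $P^{\mathsf{T}}_{\bm{s},\bm{a}}\bm{h}^{\star}(M^i)$ and the bias at the realized next state decomposes the per-iteration regret into (i) a \emph{model-error} piece bounded by $\|\tilde{P}_{\bm{s},\bm{a}} - P_{\bm{s},\bm{a}}\|_1 \cdot \text{sp}(\bm{h}^{\star}(M^i)) \leq 2H\|\tilde{P}_{\bm{s},\bm{a}} - P_{\bm{s},\bm{a}}\|_1$, (ii) a martingale piece whose increments are bias differences along the realized trajectory with range at most $H$, and (iii) a telescoping bias-boundary piece across the iteration.

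Finally, summing across $i=1,\dots,m$ is bookkeeping. The number of iterations satisfies $m = \mathcal{O}(|\mathcal{S}||\mathcal{A}|\log T)$ because REGAL opens a new iteration whenever some $v_i(\bm{s},\bm{a})$ doubles $N_i(\bm{s},\bm{a})$ (this is the cited result of Auer et al.). Azuma--Hoeffding handles the martingale piece, yielding an additive $H\sqrt{T\log(1/\delta)}$ of lower order; the telescoping boundaries contribute $\mathcal{O}(Hm)$, also lower order; and the model-error piece, after inserting the confidence radius $\sqrt{12|\mathcal{S}|\log(2|\mathcal{A}|t/\delta)/\max\{N_i(\bm{s},\bm{a}),1\}}$, reduces to the standard sum $\sum_{i,\bm{s},\bm{a}} v_i(\bm{s},\bm{a})/\sqrt{\max\{N_i(\bm{s},\bm{a}),1\}} = \mathcal{O}(\sqrt{|\mathcal{S}||\mathcal{A}|T})$ via a pigeonhole argument on the doubling schedule. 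Multiplying by $H\sqrt{|\mathcal{S}|\log(|\mathcal{A}|T/\delta)}$ produces the claimed $\mathcal{O}\bigl(H|\mathcal{S}|\sqrt{|\mathcal{A}|T\log(|\mathcal{A}|T/\delta)}\bigr)$ rate.

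The main obstacle I anticipate is the filtration-aware treatment of the martingale piece: the bias vector $\bm{h}^{\star}(M^i)$ is itself a random, history-dependent quantity that stays fixed within an iteration but jumps at iteration boundaries, so the Azuma argument must be conditioned on the $\sigma$-algebra at the start of each iteration and then stitched across iterations without letting the telescoping bias contribution blow up; the uniform span bound $H$ is what makes this stitching work. Everything else is a careful union bound over the failure events and routine manipulation of visit counts.
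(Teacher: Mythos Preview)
The paper does not prove this lemma at all: it is stated as, and cited from, Theorem~2 of \cite{bartlett2009regal}, and in the appendix the authors simply remark that ``the above Lemmas give the upper bound of $\sum_{i=1}^m\tilde{\Delta}_i$, which is given by Theorem~2 in~\cite{bartlett2009regal}.'' So there is nothing to compare against in this paper itself.

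That said, your sketch is essentially the standard Bartlett--Tewari argument for REGAL.C: build the good event so the true MDP is feasible for the span-constrained optimization, use optimism $\bm{\lambda}^\star(M^i)\ge \bm{\lambda}^\star$, expand via the optimality equation of $M^i$, and decompose into (i) a model-error term controlled by $H$ times the confidence radius, (ii) an Azuma martingale with increments bounded by $H$, and (iii) telescoping bias boundaries worth $\mathcal{O}(Hm)$. The visit-count sum $\sum_{i,\bm{s},\bm{a}} v_i(\bm{s},\bm{a})/\sqrt{\max\{N_i(\bm{s},\bm{a}),1\}}=\mathcal{O}(\sqrt{|\mathcal{S}||\mathcal{A}|T})$ and the episode count $m=\mathcal{O}(|\mathcal{S}||\mathcal{A}|\log T)$ are exactly the pieces the present paper later reuses inside the proof of Lemma~\ref{lm:modifiedregalregret} (they appear there as the quoted bound from \cite{bartlett2009regal} and Eq.~(20) of \cite{auer2009near}). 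Your anticipated filtration issue is real but is handled the way you describe: freeze $\bm{h}^\star(M^i)$ within an iteration and use the uniform span cap $H$ at the stitching points. In short, your proposal is correct and matches the original REGAL.C analysis that the paper is citing rather than reproving.
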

The above Lemmas gives the upper bound of $\sum_{i=1}^m\tilde{\Delta}_i$, which is given by Theorem $2$ in ~\cite{bartlett2009regal}. For $\sum_{i=1}^mX_i\tilde{\Delta}_i$, we have following bound:
\begin{lemma}
	\label{lm:modifiedregalregret}
	With probability at least $1-\delta$, $$\sum_{i=1}^mX_i\tilde{\Delta}_i= \mathcal{O}\left(\max_{1\leq i\leq m}\{1-\beta_i\}H|\mathcal{S}|\sqrt{|\mathcal{A}|T\log (|\mathcal{A}|T/\delta)}\right)$$
\end{lemma}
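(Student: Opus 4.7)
My plan is to bound the random sum $\sum_{i=1}^m X_i\tilde{\Delta}_i$ by first concentrating it around its conditional mean and then invoking the REGAL.C regret bound from Lemma~\ref{lm:regalregret1}. Since $X_1,\ldots,X_m$ are independent Bernoulli variables with $\Pr[X_i=1]=1-\beta_i$, drawn freshly at the start of each iteration, the identity
\[
E[X_i\tilde{\Delta}_i\mid \mathcal{F}_{i-1}] = (1-\beta_i)\tilde{\Delta}_i
\]
holds with $\mathcal{F}_{i-1}$ the natural filtration generated by the coin flips and trajectories through iteration $i-1$, provided $\tilde{\Delta}_i$ is made $\mathcal{F}_{i-1}$-measurable (or at least independent of $X_i$ conditional on $\mathcal{F}_{i-1}$). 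This converts the problem into controlling the martingale differences $(X_i-(1-\beta_i))\tilde{\Delta}_i$.

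The first step is to apply Azuma--Hoeffding to that martingale difference sequence. Using the a.s.\ bound $|\tilde{\Delta}_i|\le H\,T_i$ (each per-iteration regret is at most $\text{sp}(\bm{h}^\star)\le H$ times the length of the iteration), together with $\sum_i T_i = T$, a standard Hoeffding-type calculation gives, with probability at least $1-\delta/2$,
\[
\sum_{i=1}^m X_i\tilde{\Delta}_i \;\le\; \sum_{i=1}^m (1-\beta_i)\tilde{\Delta}_i + \mathcal{O}\!\left(H\sqrt{T\log(1/\delta)}\right).
\]
Next, I would use the elementary inequality $\sum_i(1-\beta_i)\tilde{\Delta}_i \le \max_{1\le i\le m}(1-\beta_i)\sum_i \tilde{\Delta}_i$ and invoke Lemma~\ref{lm:regalregret1} (with probability at least $1-\delta/2$) to replace $\sum_i \tilde{\Delta}_i$ by $\mathcal{O}\bigl(H|\mathcal{S}|\sqrt{|\mathcal{A}|T\log(|\mathcal{A}|T/\delta)}\bigr)$. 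A union bound over the two high-probability events yields total failure probability at most $\delta$, and the Azuma correction term is strictly lower order (it lacks the $|\mathcal{S}|\sqrt{|\mathcal{A}|\log|\mathcal{A}|}$ factor) so it is absorbed into the main term, giving exactly the statement of the lemma.

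The main obstacle I anticipate is making the martingale structure airtight. The quantity $\tilde{\Delta}_i = \sum_{\bm{s}} v_i(\bm{s},\pi_i(\bm{s}))(\bm{\lambda}^\star - \mathcal{R}(\bm{s},\pi_i(\bm{s})))$ depends on the realized visit counts during iteration $i$, which share randomness with the trajectory actually executed; yet in Lemma~\ref{lm:decompsedregret} it is treated as an ``$\pi_i$-only'' quantity multiplied by the coin $X_i$. The cleanest interpretation is to view $\tilde{\Delta}_i$ as the regret along a trajectory generated by $\pi_i$ whose randomness is independent of $X_i$ conditional on $\mathcal{F}_{i-1}$, so that $X_i$ is a genuinely fresh Bernoulli coin against which Azuma--Hoeffding applies. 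Once this measurability issue is resolved, the remaining steps are routine concentration and algebraic substitution.
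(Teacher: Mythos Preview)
Your plan is essentially the paper's: concentrate $\sum_i X_i\tilde{\Delta}_i$ around $\sum_i(1-\beta_i)\tilde{\Delta}_i$ via a Hoeffding-type inequality, pull out $\max_i(1-\beta_i)$, and invoke the REGAL.C regret bound. The one substantive difference is \emph{where} Hoeffding is applied. Rather than working with $\tilde{\Delta}_i$ as a black box, the paper first unpacks the per-iteration REGAL.C bound and isolates its dominant term
\[
c_i \;=\; \sum_{\bm{s},\bm{a}}\frac{2v_i(\bm{s},\bm{a})}{\sqrt{N_i(\bm{s},\bm{a})}}\sqrt{12|\mathcal{S}|\log(4|\mathcal{A}|T/\delta)},
\]
then applies Hoeffding to $Z_i=X_ic_i$ with range $[0,c_i]$. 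This buys something concrete: since $\sum_i c_i=\mathcal{O}\!\bigl(|\mathcal{S}|\sqrt{|\mathcal{A}|T\log(|\mathcal{A}|T/\delta)}\bigr)$ by the visitation lemma of Auer et al., the concentration correction is automatically of the same order as the main term and is absorbed. In your route the Azuma increments are bounded by $HT_i$, so the correction is $\mathcal{O}\!\bigl(H\sqrt{\sum_i T_i^2\log(1/\delta)}\bigr)$, not $\mathcal{O}\!\bigl(H\sqrt{T\log(1/\delta)}\bigr)$; these coincide only when the iteration lengths $T_i$ are uniformly $O(1)$, which Algorithm~\ref{alg:learning_dagger} does not assume. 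Your ``strictly lower order'' claim therefore needs either that extra assumption or the paper's finer decomposition. The measurability worry you raise is well placed and applies equally to the paper's argument: the $c_i$ depend on the realized visit counts $v_i(\bm{s},\bm{a})$ yet are treated as fixed ranges when Hoeffding is invoked on the coins $X_i$.
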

\begin{proof}
	Barlett and Tewari gives following bound \cite{bartlett2009regal}, with probability $1-\delta/2$
	\begin{align*}
	\sum_{i=1}^m\tilde{\Delta}_i&\leq H\left(\sum_{i=1}^{m}\sum_{\bm{s},\bm{a}}\frac{2v_{i}(\bm{s},\bm{a})}{\sqrt{N_i(\bm{s},\bm{a})}}\sqrt{12|\mathcal{S}|\log(4|\mathcal{A}|T/\delta)}\right.\\
	&\left.+\sqrt{2T\log(2/\delta)}+m+\sqrt{T}\right)
	\end{align*}
	With this result and $X_i\leq 1$, 
	\begin{align*}
	\sum_{i=1}^mX_i\tilde{\Delta}_i&\leq H\left(\sum_{i=1}^{m}X_i\sum_{\bm{s},\bm{a}}\frac{2v_{i}(\bm{s},\bm{a})}{\sqrt{N_i(\bm{s},\bm{a})}}\sqrt{12|\mathcal{S}|\log(2|\mathcal{A}|T/\delta)}\right.\\
	&\left.+\sqrt{2T\log(1/\delta)}+m+\sqrt{T}\right) \numberthis\label{eq:xidelta}
	\end{align*}
	Since the first term $$\sum_{i=1}^{m}X_i\sum_{\bm{s},\bm{a}}\frac{2v_{i}(\bm{s},\bm{a})}{\sqrt{N_i(\bm{s},\bm{a})}}\sqrt{12|\mathcal{S}|\log(2|\mathcal{A}|T/\delta)}$$ dominates the right-hand side, we need to bound it carefully. Let $$c_i=\sum_{\bm{s},\bm{a}}\frac{2v_{i}(\bm{s},\bm{a})}{\sqrt{N_i(\bm{s},\bm{a})}}\sqrt{12|\mathcal{S}|\log(2|\mathcal{A}|T/\delta)}$$ and $$Z_i=X_ic_i,$$ $Z_1,Z_2,\dots Z_m$ are independent random variables with $Z_i$ such that $0\leq Z_i\leq c_i$. Then apply Hoeffding's inequality \cite{mohri2012foundations}, we obtain, with probability at least $1-\delta/2$, 
	\begin{equation}
	\label{eq:zi}
	\sum_{i=1}^m Z_i\leq \mathbb{E}\left[\sum_{i=1}^{m} Z_i\right]-\sqrt{\frac{\sum_{i=1}^{m}c_i\log 2/\delta}{2}}.
	\end{equation}
	Due to the linearity of expectation, $$\mathbb{E}\left[\sum_{i=1}^{m} Z_i\right]=\sum_{i=1}^{m} \mathbb{E}\left[Z_i\right]=\sum_{i=1}^{m} \mathbb{E}\left[X_i\right]c_i=\sum_{i=1}^{m} (1-\beta_i)c_i,$$
	Combining this with Eq. (\ref{eq:zi}),
	\begin{align*}
	\sum_{i=1}^{m}X_ic_i&=\sum_{i=1}^m Z_i\\
	&\leq \sum_{i=1}^{m}(1-\beta_i)c_i-\sqrt{\frac{\sum_{i=1}^{m}c_i\log 2/\delta}{2}}\\
	&\leq \sum_{i=1}^{m}(1-\beta_i)c_i \\
	&= \sum_{i=1}^{m}(1-\beta_i)\sum_{\bm{s},\bm{a}}\frac{2v_{i}(\bm{s},\bm{a})}{\sqrt{N_i(\bm{s},\bm{a})}}\sqrt{12|\mathcal{S}|\log(4|\mathcal{A}|T/\delta)}\\
	&\leq \max_{1\leq i\leq m}\{1-\beta_i\}\sum_{i=1}^{m}\sum_{\bm{s},\bm{a}}\frac{2v_{i}(\bm{s},\bm{a})}{\sqrt{N_i(\bm{s},\bm{a})}}\sqrt{12|\mathcal{S}|\log(4|\mathcal{A}|T/\delta)}.
	\end{align*}
	Plugging it into Eq. (\ref{eq:xidelta}), we get
	\begin{align*}
	&\sum_{i=1}^mX_i\tilde{\Delta}_i \\
	&\leq H\left(\max_{1\leq i\leq m}\{1-\beta_i\}\sum_{i=1}^{m}\sum_{\bm{s},\bm{a}}\frac{2v_{i}(\bm{s},\bm{a})}{\sqrt{N_i(\bm{s},\bm{a})}}\sqrt{12|\mathcal{S}|\log(4|\mathcal{A}|T/\delta)}\right.\\
	&\left.+\sqrt{2T\log(2/\delta)}+m+\sqrt{T}\right) \numberthis\label{eq:removex_i}
	\end{align*}
	Eq ($20$) in \cite{auer2009near} gives 	$$\sum_{i=1}^{m}\sum_{\bm{s},\bm{a}}\frac{v_{i}(\bm{s},\bm{a})}{\sqrt{N_i(\bm{s},\bm{a})}} \leq (\sqrt{2}+1)\sqrt{|\mathcal{S}||\mathcal{A}|T},$$
	Substitute this for Eq. (\ref{eq:removex_i}) and Eq. (20) in ~\cite{auer2009near}  ($m\leq |\mathcal{S}||\mathcal{A}|\log_2 (8T/|\mathcal{S}||\mathcal{A}|)$ if $T\geq |\mathcal{S}||\mathcal{A}|$), yielding,
	\begin{align*}
	\sum_{i=1}^mX_i\tilde{\Delta}_i= \mathcal{O}\left(\max_{1\leq i\leq m}\{1-\beta_i\}H|\mathcal{S}|\sqrt{|\mathcal{A}|T\log (|\mathcal{A}|T/\delta)}\right),
	\end{align*}
	with probability $1-\delta$.
\end{proof}

\begin{proof}[\textbf{Proof of Theorem 2}] 
	Using Lemma \ref{lm:regalregret1} and \ref{lm:modifiedregalregret}, with probability at least $1-\delta$, 
	$$\sum_{i=1}^m\tilde{\Delta}_i=\mathcal{O}\left(H|\mathcal{S}|\sqrt{|\mathcal{A}|T\log (|\mathcal{A}|T/\delta)}\right),$$ and
	$$\sum_{i=1}^mX_i\tilde{\Delta}_i= \mathcal{O}\left(\max_{1\leq i\leq m}\{1-\beta_i\}H|\mathcal{S}|\sqrt{|\mathcal{A}|T\log (|\mathcal{A}|T/\delta)}\right).$$
	Combining with Eq. \ref{eq:twoparts}, we have,
	\begin{align*}
	\sum_{i=1}^m\Delta_i&\leq \max_{1\leq i\leq m}\{\rho_i\}\mathcal{O}\left(H|\mathcal{S}|\sqrt{|\mathcal{A}|T\log (|\mathcal{A}|T/\delta)}\right) \\
	&-\max_{1\leq i\leq m}\{\rho_i\} \mathcal{O}\left(\max_{1\leq i\leq  m}\{1-\beta_i\}H|\mathcal{S}|\sqrt{|\mathcal{A}|T\log (|\mathcal{A}|T/\delta)}\right)\\
	&+ \mathcal{O}\left(\max_{1\leq i\leq m}\{1-\beta_i\}H|\mathcal{S}|\sqrt{|\mathcal{A}|T\log (|\mathcal{A}|T/\delta)}\right)\\
	\shortintertext{Let $\bm{\rho}=\max_{1\leq i\leq m}\{\rho_i\}$ and $1-\bm{\beta}=\max_{1\leq i\leq m}\{1-\beta_i\}$}
	&=\mathcal{O}\left((1-\bm{\beta}+\bm{\rho\beta})H|\mathcal{S}|\sqrt{|\mathcal{A}|T\log (|\mathcal{A}|T/\delta)}\right)
	\end{align*}
\end{proof}
\begin{proof}[\textbf{Proof of Corollary 1}]
	If the teacher is optimal, then $\Delta^T_i=0=\rho_i\tilde{\Delta}_i$, that is $\rho_i=0$ for $i=1,\dots,m$. Therefore, $\bm{\rho}=0$. The result follows.
\end{proof}
\section{Domains GUI Examples}

Here we provide the GUI examples of Grid World and Block Dude which are used as the experimental domains in the main paper.

\begin{figure}[h]
	\centering
	\includegraphics[width=0.8\linewidth]{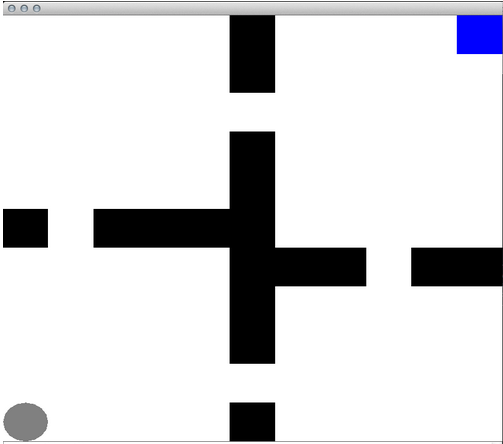}
	\caption{An GUI example of the Grid World. The map is separated into four rooms by the wall. An agent in the lower left corner tries to reach the goal state in the upper right corner.}
	\label{fig:gridworldexmaple}
\end{figure}

\begin{figure}[h]
	\centering
	\includegraphics[width=0.8\linewidth]{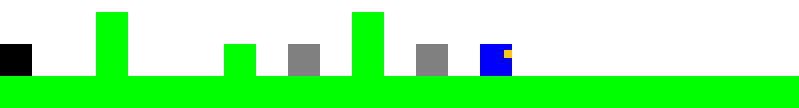}
	\caption{An GUI example of the Block Dude. The agent needs to move the blocks to assist it to reach the goal state.}
	\label{fig:blockdudeexample}
\end{figure}

\end{document}